\def\pgfplots@install@path@replacements{%
    \ifpgfplots@path@replace@ellipse
        \let\tikz@do@circle=\pgfplots@path@@tikz@do@circle
        \let\tikz@do@ellipse=\pgfplots@path@@tikz@do@circle
        \expandafter\def\expandafter\pgfinterruptpicture\expandafter{\pgfinterruptpicture
            \let\tikz@do@circle=\pgfplots@path@@tikz@do@circle@orig
            \let\tikz@do@ellipse=\pgfplots@path@@tikz@do@ellipse@orig
        }%
    \fi
}%
\let\pgfplots@path@@tikz@do@circle@orig=\tikz@do@circle
\let\pgfplots@path@@tikz@do@ellipse@orig=\tikz@do@ellipse
\let\pgfplots@path@@tikz@do@circle@oldandbroken=\pgfplots@path@@tikz@do@circle
\def\pgfplots@path@@tikz@do@circle#1{\pgfplots@path@@tikz@do@circle@oldandbroken{#1}{#1}}
\def\pgfplots@path@@tikz@do@ellipse#1#2{\pgfplots@path@@tikz@do@circle@oldandbroken{#1}{#2}}
\pgfplotsset{
  ylabel right/.style={
    after end axis/.append code={
      \node [rotate=90, anchor=north] at (rel axis cs:1,0.5) {#1};
    }   
  }
}
\def\thickhline{%
  \noalign{\ifnum0=`}\fi\hrule \@height \thickarrayrulewidth \futurelet
   \reserved@a\@xthickhline}
\def\@xthickhline{\ifx\reserved@a\thickhline
               \vskip\doublerulesep
               \vskip-\thickarrayrulewidth
             \fi
      \ifnum0=`{\fi}}
\newlength{\thickarrayrulewidth}
\definecolor{grey}{rgb}{0.7,0.7,0.7}
\definecolor{bgreen}{rgb}{0.7,1,0.7}
\definecolor{green}{RGB}{4,101,53}
\definecolor{bgreen}{RGB}{98,159,67}
\definecolor{dbrown}{RGB}{75,54,33}
\definecolor{bblue}{RGB}{10,157,217}
\definecolor{blue}{RGB}{39,59,129}
\definecolor{bred}{RGB}{217,10,100}
\definecolor{red}{RGB}{160,30,50}
\newcommand\numberthis{\addtocounter{equation}{1}\tag{\theequation}}
\DeclareMathOperator{\E}{\mathrm{E}}
\DeclareMathOperator{\Var}{\mathrm{Var}}
\newtheorem{theorem}{Theorem}
\newtheorem{lemma}{Lemma}
\algnewcommand{\LineComment}[1]{\State \(\triangleright\) {\tt #1}}
\def\BState{\State\hskip-\ALG@thistlm}
\newcommand{\pgmpool}{\langle G, \hat{\theta} \rangle}
\newcommand{\pgmpop}{\langle G, \theta \rangle}
\newcommand{\pgmpopp}{\langle G', \theta' \rangle}
\newcommand{\parpool}[1][i]{Pa_{X_{#1}}^{G};\hat{\theta}}
\newcommand{\pgmdifpop}{\langle G_{pop}, \theta \rangle}
\newcommand{\hin}{\text{H}_{\text{IN}}}
\newcommand{\hout}{\text{H}_{\text{OUT}}}
\newcommand{\parpop}[1][i]{Pa_{X_{#1}}^{G};\theta}
\newcommand{\ind}[2]{1_{\{Pa_{X_{#1}}^G = {#2}\}}}
\title{Quantifying the Privacy Risks of Learning High-Dimensional Graphical Models}
\author{
  Sasi Kumar Murakonda, Reza Shokri, George Theodorakopoulos$^\dagger$  \\
  National University of Singapore (NUS), $^\dagger$Cardiff University\\
  \texttt{\{murakond, reza\}@comp.nus.edu.sg, TheodorakopoulosG@cardiff.ac.uk}
}
\date{}
\begin{document}

\maketitle

\begin{abstract}
  Models leak information about their training data.  This enables attackers to infer sensitive information about their training sets, notably determine if a data sample was part of the model's training set.  The existing works {\em empirically} show the {\em possibility} of these  membership inference (tracing) attacks against complex deep learning models. However, the attack results are dependent on the specific training data, can be obtained {\em only after} the tedious process of training the model and performing the attack, and are missing any measure of the confidence and unused potential power of the attack.  

  In this paper, we {\em theoretically} analyze the maximum power of tracing attacks against high-dimensional graphical models, with the focus on Bayesian networks.  We provide a tight upper bound on the power (true positive rate) of these attacks, with respect to their error (false positive rate), for a given model structure {\em even before} learning its parameters. As it should be, the bound is independent of the knowledge and algorithm of any specific attack. It can help in identifying which model structures leak more information, how adding new parameters to the model increases its privacy risk, and what can be gained by adding new data points to decrease the overall information leakage. It provides a measure of the potential leakage of a model given its structure, as a function of the model complexity and the size of the training set.
\end{abstract}


\section{Introduction}

How much is the privacy risk of releasing high-dimensional models which are trained on sensitive data?  We focus on measuring information leakage of models about their training data, using tracing (membership inference) attacks. In a tracing attack, given the released model and a target data sample, the adversary aims at inferring whether or not the target sample was a member of the training set. We use the term tracing attack and membership inference attack interchangeably~\cite{dwork2017exposed, shokri2017membership}. The attack is evaluated based on its power (true positive rate), and its error (false positive rate), in its binary decisional task. 

Tracing attacks have been extensively studied for summary statistics, where independent statistics (e.g., mean) of attributes of high-dimensional data are released. Homer et al.~\cite{homer2008resolving} showed the existence of powerful tracing attacks; more recent work provided theoretical frameworks to analyze the upper bound on the power of these inference attacks~\cite{sankararaman2009genomic}, and their robustness to noisy statistics~\cite{dwork2015robust}.  The theoretical analysis helps explain the major causes of information leakage and assess the privacy risk even before computing the statistics.  However, the analysis in~\cite{homer2008resolving,sankararaman2009genomic,dwork2015robust} is limited to simple models such as product distributions. 

Advanced machine learning models, such as deep neural networks, have recently been tested against tracing attacks.  In the black-box setting, the attacker can only observe predictions of the model.  The attack involves training inference models that can distinguish between members and non-members from the predictions that the target model produces~\cite{shokri2017membership}.  The attacks are tested against deep neural networks as well as machine-learning-as-a-service platforms, and their accuracy is shown to be related to their generalization error~\cite{shokri2017membership, yeom2018privacy}.  In the white-box setting, the attacker obtains the parameters of the model, and decides that the target sample is a member if the gradients of the loss with respect to the model's parameters computed on the target data sample are aligned with the model's parameters~\cite{nasr2019comprehensive}. Large models are empirically shown to be more vulnerable to the attack, even if they have better generalization performance.  These attacks highlight the susceptibility of high-dimensional neural networks to tracing attacks.  However, their analysis is limited to empirical measurements of the attack success, after training the models on particular data sets. 

\paragraph*{\bf\em Contributions} Using the above-mentioned existing methods, it is possible to reason theoretically about tracing attacks, yet only for simple models (independent statistics). In parallel, it is possible to perform empirical tracing attacks against complex models (deep neural networks), yet without much theoretical analysis on the maximum power of inference attacks.  In this paper, we aim at addressing this gap by providing a theoretical analysis of tracing attacks against high dimensional graphical models, i.e. models with many parameters. We provide a bound on the performance of tracing attacks to quantify the privacy risk that learning a model implies for the training set. Using the bound, one can determine the elements of a released model that contribute to the power of the attacker.  Our focus is on {\em probabilistic graphical models}, which are very general statistical models that capture the correlations among data attributes, as they are among fundamental models for machine learning, and the basis of deep learning models via e.g., restricted Boltzmann machines.

We use the likelihood ratio test (LR test) as the foundation of our tracing attack~\cite{sankararaman2009genomic}.  This enables us to {\bf design the most powerful attack} against any probabilistic model.  Thus, for any given error, there exists no other attack strategy that can achieve a higher power.  Our objective is not to empirically evaluate the performance of attacks (even the theoretically strongest one) on trained models.  We instead {\bf compute the maximum achievable power of tracing attacks}.  This upper bound can be used as a measure to evaluate the effectiveness of different attack algorithms by comparing their achieved power to the bound for any false positive error.  

Our objective is to identify the elements of a model that cause membership information leakage and measure their influence.  We prove that, for a given model structure, the potential leakage of the model (the leakage that corresponds to the most powerful attack for any given error) is proportional to the square root of model's complexity (defined as the number of its independent parameters), and is inversely proportional to the square root of the size of the training set.  Thus, the theoretical bound enables us to {\bf quantify the potential leakage of a model before even learning the parameters of the model} on that structure. This can be used to efficiently compare different model structures based on their susceptibility to tracing attacks. The theoretical bound can quantify the power that the attack gains/loses if a new attribute is added/removed from the data, or when the model requires capturing/removing the correlation between certain attributes.  It also determines the size of the training set for a very high-dimensional model that leaks a similar amount of information as a small model leaks on a small set of data. 

We evaluate our attack against real (sensitive) data: location check-ins, purchase history, and genome data.  We empirically show that the upper bound is tight, and the power of the likelihood ratio test attacker is extremely close to the bound for any false positive error. 

\section{Probabilistic Graphical Models}\label{sec:pgm}

Probabilistic graphical models make use of a graph-based representation to encode the dependencies and conditional independence between random variables~\cite{koller2009probabilistic}.  Each node $X_i$ in a graph $G$ is a random variable, and the edges represent the dependencies.  In this paper, we focus on {\bf Bayesian networks}.  However, our attack framework is easily applicable to Markov random fields.  In Bayesian networks, the model structure is a directed acyclic graph, and the model $\pgmpop$ enables factoring the joint probability over the random variables.  Given all its parents in the graph, a random variable is conditionally independent of other random variables.  Therefore, the joint distribution can be factored as follows. 
\begin{align}\label{eq:factoring}
	\Pr[X_1, X_2, \cdots, X_m] = \prod_{i=1}^{m} \Pr[X_i | \parpop],
\end{align}
where $Pa_{X_{i}}^{G}$ is the parent random variables of $X_i$.  The parameters $\theta$ of the model encode the conditional probabilities. 

We define the {\bf complexity} $C(G)$ of a Bayesian network $\pgmpop$ with discrete random variables as the number of independent parameters used to define its probability distribution.   Let $V(X)$ be the number of distinct values that a random variable $X$ can take.  For each conditional probability $\Pr[X_i | \parpop]$, we need $|V(Pa_{X_i}^G)| (|V(X_i)| - 1)$ independent parameters. Thus, the total complexity of a model is: 
\begin{align}\label{eq:complexity}
	C(G) = \sum_{i=1}^m |V(Pa_{X_i}^G)| (|V(X_i)| - 1).
\end{align}

\section{Problem Statement}\label{sec:problem}

We consider a set of $n$ independent $m$-dimensional data samples from a {\em population}.  We refer to this set as the {\em pool}. We do not make any assumption about the probability distribution of the data points in the general population. Given a graphical model structure $G$, the pool data is used to train a graphical model, i.e., to estimate the parameters $\hat{\theta}$ of the probabilistic graphical {\bf model} $\pgmpool$. The estimation can be either Max likelihood estimation (MLE) or Max a posteriori (MAP) estimate. This model is {\em released}.  Our objective is to quantify the privacy risks of releasing such models for the members of their training data. 

Let us consider an adversary who observes the released model $\pgmpool$. We assume that the attacker can collect a set of independent samples from the population.  We refer to this set as the {\em reference population}.  The objective of the adversary is to perform a {\bf tracing attack} (also known as the membership inference attack) against the released model, on any target data point $x$: create a decision rule that determines whether $x$ was used in the training of the parameters of $\pgmpool$ or not, i.e. to classify $x$ as being in the pool (IN) or not (OUT).

The accuracy of the tracing attack indicates the information leakage of the model about the members of its training set.  We quantify the attacker's success using two evaluation metrics: the adversary's {\bf power} (the true positive rate), and his {\bf error} (the false positive rate).  The power measures the conditional probability that the attacker classifies $x$ as IN, given that $x$ is indeed in the pool, i.e. $\Pr[IN|x \in \text{pool}]$.  The error measures the conditional probability that the attacker classifies $x$ as IN, given that $x$ is not in the pool, i.e. $\Pr[IN|x \notin \text{pool}]$. The ROC curve (Receiver Operating Characteristic), which is a plot of power versus error, captures the trade-off between power and error.  Thus, the area under the ROC curve (AUC) is a single metric for measuring the strength of the attack. The AUC can be interpreted as the probability that a randomly drawn data point from the pool will be assigned larger probability of being IN than a data point randomly drawn from outside the pool. So AUC = 1 implies that the attacker can correctly classify all data samples as IN or OUT.




\subsection{Membership Inference Attack against Graphical Models}\label{sec:attack}

Given the released model, the reference population, and the target data point, the adversary aims at distinguishing between two hypothesis.  Each hypothesis describes a possible world that could have resulted in the observation of the adversary, where in one world the target data was part of the training set (pool), while in the other one the target data is a random sample from the population.

\begin{itemize}
	\item Null hypothesis ($\hout$): The pool is constructed by drawing $n$ independent samples from the general population.  Parameters $\hat{\theta}$ of the model $\pgmpool$ are learned on the pool data.  Target data $x$ is drawn from the general population, independently from the pool.
	\item Alternative hypothesis ($\hin$): The pool is constructed by drawing $n$ independent samples from the general population.  Parameters $\hat{\theta}$ of the model $\pgmpool$ are learned on the pool data. Target data $x$ is drawn from the pool.
\end{itemize}

This generalizes the hypothesis test designed by Sankararaman et al.~\cite{sankararaman2009genomic}, in which the released model follows a product distribution, i.e. the random variables corresponding to data attributes are independent (equivalent to a probabilistic graphical model without any dependency edges between the nodes).

We use the Likelihood Ratio test to distinguish the two hypotheses. The goal of hypothesis testing is to find whether there is \textbf{enough evidence to reject the null hypothesis \textit{in favor of the alternative hypothesis},} i.e. whether the likelihood $L_\text{IN}$ of the alternative hypothesis is large enough compared to the likelihood $L_\text{OUT}$ of the null hypothesis. The only information we know about the pool is $\hat{\theta}$, the parameters of the released model learned using the pool data.  Hence, we must calculate these {\em exact same parameters} under null hypothesis (i.e., learn the parameters using general population).  Let $\theta$ be the result of this computation, i.e., the parameters of $G$ trained on a large reference population. We calculate $L_\text{IN}$ as the likelihood of the parameters of $G$ taking the value $\hat{\theta}$, which is equal to $\Pr[x ; \pgmpool]$. Similarly, we calculate $L_\text{OUT}$ as the likelihood of the parameters of $G$ taking the value $\theta$, which is equal to $\Pr[x ; \pgmpop]$. 

Hence, the log likelihood statistic is computed as follows. 
\begin{equation}\label{eq:log-likelihood-ratio}
L(x) = \log\left(\frac{\Pr[x ; \pgmpop]} {\Pr[x ; \pgmpool]}\right)
\end{equation}

The LR test is a comparison of the log likelihood statistic $L(x)$ with a threshold.  If $L(x) \leq \text{threshold}$, then the attacker decides in favor of $\hin$ (rejects $\hout$); else, he decides in favor of $\hout$ (more precisely, he fails to reject $\hout$ because there is not enough evidence to support this rejection in favor of $\hin$). To determine the threshold, the attacker selects a (false positive rate) error $\alpha$ that he is willing to tolerate.  He then empirically or theoretically estimates the distribution of $L(x)$ under the null hypothesis, using his reference population. We denote the CDF of this distribution as $F$.  Given $\alpha$ and $F$, the attacker computes a threshold value $F^{-1}(\alpha)$ and compares it to $L(x)$, to decide whether to reject the null hypothesis. This concludes the hypothesis test.

The \emph{power} of the test, as defined earlier, can be expressed as $\Pr[L(x) \leq F^{-1}(\alpha)]$, computed under the alternative hypothesis, for an individual data point $x$ randomly drawn from the pool. In other words, it is the fraction of pool data points that are correctly classified by the test.  By varying $\alpha$, and thus the threshold $F^{-1}(\alpha)$, we can draw the ROC curve and compute the AUC metric as well.  It is worth emphasizing that according to the Neyman-Pearson lemma~\cite{neyman1933ix}, the LR test achieves the {\bf maximum power} among all decision rules with a given error (false positive rate).  So, any other decision rule would result in a lower AUC. 

\section{Theoretical analysis - Bound on power of attack}\label{sec:attack-analysis}

Our objective is to compute the maximum power $\beta$ for any false positive error $\alpha$ of an adversary that observes the released model $\pgmpool$ which has been trained on a pool of size $n$. In our main result, Theorem~\ref{thm:main-result}, we show which combinations of $\alpha$ and $\beta$ are possible for the attacker, and we find the major factors that determine these combinations, as a function of the model complexity and size of the dataset. To derive our main result about the best achievable power-error tradeoff, we assume that the released parameters satisfy the below conditions.

\begin{itemize}
	\item The value of every released parameter is learned from a large enough number of samples for the central limit theorem to hold good. 
	\item The value of every released parameter is non-trivial i.e., it is bounded away from 0 and 1~\cite{sankararaman2009genomic}.
\end{itemize}

These are valid assumptions to make on part of the model publisher, as it is not beneficial to publish statistically insignificant or trivial estimates. In fact, the recently published methodology of learning Bayesian Networks on Cancer Analysis System (CAS) database in the National Cancer Registration and Analysis Service (NCRAS) has similar assumptions (they use only the parameters that are learned using at least 50 samples)\footnote{\url{https://simulacrum.healthdatainsight.org.uk/publications}}


\begin{theorem}\label{thm:main-result}
	Let $\beta$ and $\alpha$ be the power and error of the LR test, for the membership inference attack, respectively.  Let $n$ be the size of the pool (model's training set), and $C(G)$ be the complexity of the released probabilistic graphical model $\pgmpool$. Then, the tradeoff between power and error follows the following relation: 
	\begin{equation}
	z_{\alpha} + z_{1-\beta} \approx \sqrt{\frac{C(G)}{n}},
	\end{equation}
	where $z_s$ is the quantile at level $1 - s, 0 < s < 1$ of the Standard Normal distribution.
\end{theorem}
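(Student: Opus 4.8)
The plan is to use the Bayesian-network factorization \eqref{eq:factoring} to write the log-likelihood statistic $L(x)$ as a sum of per-conditional-table log-ratios, Taylor-expand it in the small deviation $\varepsilon \coloneqq \hat\theta-\theta$, and then compute the first two moments of $L(x)$ separately under $\hout$ and $\hin$ using the asymptotic normality of maximum-likelihood (or MAP) estimates. Writing $S(x)=\nabla_\theta\log\Pr[x;\pgmpop]$ for the score and $H(x)$ for the Hessian of the log-likelihood at $\theta$, one has $L(x)=\log\Pr[x;\pgmpop]-\log\Pr[x;\pgmpool] = -S(x)^\top\varepsilon-\tfrac12\varepsilon^\top H(x)\varepsilon+R$ with $R$ a cubic remainder. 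Because the model factorizes over conditional tables, $S$ and $H$ are block-structured over pairs (attribute $i$, parent configuration $j$), each block being the score/information of a categorical distribution on $|V(X_i)|$ outcomes, and $\hat\theta_{ij\cdot}$ is just a vector of empirical conditional frequencies built from $\approx n\Pr[Pa_{X_i}^G=j]$ effective samples. Under the stated CLT assumption, $\varepsilon\approx\tfrac1n I^{-1}\sum_{l=1}^n S(x_l)$ with $I=\E[S(X)S(X)^\top]$ the (block-diagonal) Fisher information, so $\E[\varepsilon]\approx 0$ and $\E[\varepsilon\varepsilon^\top]\approx I^{-1}/n$.

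The key bookkeeping identity I would isolate is $\operatorname{tr}\!\big(I\,\E[\varepsilon\varepsilon^\top]\big)\approx\tfrac1n\operatorname{tr}(\mathrm{Id})=C(G)/n$: in each block $(i,j)$ the weight $\Pr[Pa_{X_i}^G=j]$ appears in the Fisher information (the target lands in parent-state $j$ with that probability) and divides the variance of $\hat\theta_{ij\cdot}$ (that many effective samples estimate it), so the weights cancel and the block contributes exactly $|V(X_i)|-1$, summing to $\sum_i|V(Pa_{X_i}^G)|(|V(X_i)|-1)=C(G)$ as in \eqref{eq:complexity}. Under $\hout$ the target $x$ is drawn from the population independently of the pool, so $S(x)\perp\varepsilon$: the linear term has mean zero, the quadratic term gives $\E_{\hout}[L(x)]\approx\tfrac12\operatorname{tr}(I\,\E[\varepsilon\varepsilon^\top])=C(G)/(2n)$ (using $\E[H(x)]=-I$), and conditioning on the pool gives $\operatorname{Var}(S(x)^\top\varepsilon\mid\mathrm{pool})=\varepsilon^\top I\varepsilon$, hence $\operatorname{Var}_{\hout}(L(x))\approx\operatorname{tr}(I\,\E[\varepsilon\varepsilon^\top])=C(G)/n$. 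A Lyapunov-type CLT over the many individually negligible (by the non-triviality assumption) contributions then gives that $L(x)\mid\hout$ is approximately $\mathcal N\big(\tfrac{C(G)}{2n},\tfrac{C(G)}{n}\big)$.

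Under $\hin$ the target $x$ is one of the $n$ pool points, and this is the crux: $S(x)$ and $\varepsilon$ are now correlated, and since only the single summand $S(x)$ inside $\varepsilon\approx\tfrac1n I^{-1}\sum_l S(x_l)$ survives the expectation, $\E[S_j(x)\varepsilon_k]\approx\tfrac1n(II^{-1})_{jk}=\tfrac1n\delta_{jk}$. Hence the linear term no longer vanishes: $\E_{\hin}[-S(x)^\top\varepsilon]\approx -C(G)/n$, while the quadratic term still contributes $\approx +C(G)/(2n)$, for a total $\E_{\hin}[L(x)]\approx -C(G)/(2n)$; the variance acquires only higher-order corrections, so $L(x)\mid\hin$ is approximately $\mathcal N\big(-\tfrac{C(G)}{2n},\tfrac{C(G)}{n}\big)$. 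Thus the two hypotheses induce Gaussians with common variance $\sigma^2=C(G)/n$ and opposite means $\pm\sigma^2/2$. Finally, with threshold $t=F^{-1}(\alpha)$, the error equation $\alpha=\Pr_{\hout}[L(x)\le t]$ gives $t=\tfrac{C(G)}{2n}+\sigma\,\Phi^{-1}(\alpha)$ and the power equation $\beta=\Pr_{\hin}[L(x)\le t]$ gives $t=-\tfrac{C(G)}{2n}+\sigma\,\Phi^{-1}(\beta)$; equating and dividing by $\sigma=\sqrt{C(G)/n}$ yields $\Phi^{-1}(\beta)-\Phi^{-1}(\alpha)=\sqrt{C(G)/n}$, which by symmetry of the standard normal and the convention $z_s=\Phi^{-1}(1-s)$ (so $\Phi^{-1}(\beta)=z_{1-\beta}$, $\Phi^{-1}(\alpha)=-z_\alpha$) is exactly $z_\alpha+z_{1-\beta}\approx\sqrt{C(G)/n}$.

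The main obstacle is the $\hin$ analysis: one must carefully track the dependence between the target point and the training set — it is precisely this dependence that flips the sign of the leading mean and separates the two Gaussians — and check that the quadratic-term contributions and the discarded cubic remainder $R$ are genuinely $o(C(G)/n)$. This is where both assumptions are used: "each released parameter learned from enough samples for the CLT to hold" licenses the Gaussian approximations and $\E[\varepsilon\varepsilon^\top]\approx I^{-1}/n$, and "each released parameter bounded away from $0$ and $1$" keeps the per-block Fisher informations and variances well conditioned, so no single term dominates, the CLT applies, and the higher-order Taylor terms are controlled. A secondary technicality — the constrained parametrization of Bayesian networks and the randomness of the parent-configuration counts — is absorbed by working throughout with the parametrization-invariant traces $\operatorname{tr}(I\,\E[\varepsilon\varepsilon^\top])$, which count free parameters and reproduce $C(G)$ regardless of how each conditional table is coordinatized.
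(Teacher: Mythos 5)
Your proposal is correct, and it lands on exactly the quantities the paper computes: means $\pm\frac{C(G)}{2n}$ and common variance $\frac{C(G)}{n}$ under the two hypotheses, a Gaussian approximation for $L(x)$, and the same quantile-matching step $\mu_0 - z_{\alpha}\sigma_0 = \mu_1 - z_{\beta}\sigma_1$ yielding $z_{\alpha}+z_{1-\beta}\approx\sqrt{C(G)/n}$. What differs is the machinery for the moments. The paper decomposes $L(x)=\sum_i L_i$ with $L_i=\sum_v \ind{i}{v}L_i^v$ and evaluates each term by explicit Taylor expansion after plugging in nested Gaussian approximations for $\hat p_i^v$ and for the random parent-configuration counts $n_i^v$, treating the cross terms $\E[L_iL_j]$ case-by-case according to shared parents (Appendices~\ref{subsec:derivation-mean-variance}--\ref{app:categorical}); this yields quantified error terms $O(C(G)n^{-2})$ and $O((C(G))^2n^{-2})$, and even exact corrections for specific models such as Naive Bayes (Appendix~\ref{app:naive-bayes}). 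You instead run the standard asymptotic-statistics route: a second-order expansion of $L$ in $\varepsilon=\hat\theta-\theta$, the influence-function representation $\varepsilon\approx\frac{1}{n}I^{-1}\sum_l S(x_l)$, and the trace identity $\operatorname{tr}\bigl(I\,\E[\varepsilon\varepsilon^\top]\bigr)\approx C(G)/n$, with the sign flip under $\hin$ coming from the score--estimate correlation $\E[S(x)\varepsilon^\top]\approx \frac{1}{n}\mathrm{Id}$. This is more compact, makes transparent why only the count of free parameters (and not the data distribution) drives the leading-order leakage and why the two centers separate, and it generalizes essentially verbatim to any regular parametric model, not just Bayesian networks. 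What it buys less of is explicit control of what is discarded: the paper's per-parameter expansion tracks the randomness of $n_i^v$ and the size of the higher-order corrections concretely, whereas you absorb these into the remainder with a qualitative appeal to the CLT and non-triviality assumptions; note also that your trace bookkeeping silently relies on the block-diagonality of the Fisher information across conditional tables (independence of estimation errors), which the paper justifies separately in Appendix~\ref{sec:fisher}.
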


\begin{proof}[Proof sketch]
To compute $\beta = \Pr_{pool}\{L(x) \leq F^{-1}(\alpha)\}$, the power of the LR test for the inference attack, for any error $\alpha$, we need the distribution of $L(x)$ when $x$ is drawn from the pool and when $x$ is drawn from the population. Our approach to estimating the distributions of $L(x)$ is through computing its moments $\E(L^k), k > 0$.  To approximate the distribution using its moments, we use an established statistical principle for fitting a distribution with known moments: the maximum-entropy principle. This principle states that the probability distribution which best represents the current state of knowledge is the one with largest entropy \cite{jaynes1957information1,jaynes1957information2}.  

To simplify the computation of the moments, we take advantage of the Bayesian decomposition to split this $L(x)$ as sum of simpler terms (one for each attribute $X_i$).  We start by expanding \eqref{eq:log-likelihood-ratio} to give the following expression for $L(x)$:
\begin{align*}
L(x) =& \log\left(\frac{\Pr[x ; \pgmpop]} {\Pr[x ; \pgmpool]}\right)
= \log\left(\frac{\prod_{i=1}^{m} \Pr[X_i | \parpop]}{\prod_{i=1}^{m} \Pr[X_i | \parpool]}\right)\\
=& \sum_{i=1}^{m} \log\left(\frac{\Pr[X_i | \parpop]}{\Pr[X_i | \parpool]}\right) \numberthis
\end{align*}
where the $X_i$ are the attributes of the data point $x$, which is now a random variable as it is drawn from the pool (or population), as just mentioned. We define $L_i$ as the contribution of attribute $X_i$ to the likelihood ratio $L$. Hence the value of $L_i$ can be calculated as:
\begin{align}
L_i = \log\left(\frac{\Pr[X_i | \parpop]}{\Pr[X_i | \parpool]}\right) \label{eq:Li-definition}
\end{align}

We calculate the first two moments of $L(x)$ for our approximation. The mean and variance of $L(x)$ are $\mu_0 = \frac{C(G)}{2n}, \sigma_0^2 = \frac{C(G)}{n}$ under the null hypothesis and $\mu_1 = -\frac{C(G)}{2n}, \sigma_1^2 = \frac{C(G)}{n}$ under the alternative hypothesis (See proof in Appendix~\ref{subsec:derivation-mean-variance}). For a known mean $\mu$ and variance $\sigma^2$, the max-entropy distribution that matches the target distribution is a Gaussian $N(\mu, \sigma^2)$. Deriving higher order moments of $L(x)$ requires information about the exact distribution that generated the data. Note that in our analysis, we do not make any assumption on the distribution from which data is generated. We just assume that the pool and the population are from the same distribution. Making assumptions on the distribution that generated the data limits the practical utility of such bounds (as we want to estimate potential leakage from a model, before touching the data). See Appendix~\ref{subsec:derivation-mean-variance} for details on how the data generator distribution affects the distribution of the log-likelihood ratio.

Given this approximation, and the computed mean and variance, the relationship between power $\beta$, and error $\alpha$ is
\begin{equation}\label{eq:gaussian-metrics}
	\mu_0 - z_{\alpha}\sigma_0 = \mu_1 - z_{\beta}\sigma_1
\end{equation}
where $z_s$ is the quantile at level $1 - s, 0 < s < 1$ of the standard normal distribution. This equation can be derived by equating quantiles at level $\beta$, $\alpha$ in the pool and population distribution respectively.

Substituting $\mu_0, \sigma_0, \mu_1, \sigma_1$ into~\eqref{eq:gaussian-metrics}, we derive the main result. 
\end{proof}

In case of high-dimension models, the log-likelihood ratio distribution is very close to normal distribution (as it is a sum of large number of independent random variables) and assuming it follows a normal distribution is a good approximation for most practical purposes. As we will show in the latter sections (Section~\ref{subsec:boundValidity} and Section~\ref{subsec:bound-insight}), the contribution of higher order moments to the estimates of privacy risk and understanding of the sources of information leakage is marginal. We illustrate this by comparing our theoretical bound with the empirically observed maximum power for any error and explaining the power of attacks using parameter estimation errors. 

The intuition behind our result is that the centers ($\mu_0 = \frac{C(G)}{2n}$ and $\mu_1 = -\frac{C(G)}{2n}$) of $L(x)$ under the null and alternative hypotheses are separated by a distance of $\frac{C(G)}{n}$. The overlap between the distributions is determined by variance $\frac{C(G)}{n}$ of the statistic, and the amount of the overlap between the two distributions determines the power $\beta = \Pr_{pool}\{L(x) \leq F^{-1}(\alpha)\}$ for any error $\alpha$. 

Our result generalizes that of Sankararaman~et~al.~\cite{sankararaman2009genomic} on releasing independent marginals. In their case, the released graph has no edges and nodes are binary variables. The complexity of such a graph is equal to the number of nodes $m$. Hence, for independent marginals we recover Sankararaman et al's relation:
\begin{equation}
z_{\alpha} + z_{1-\beta} = \sqrt{\frac{m}{n}}.
\end{equation}

\subsection{Insights from the bound:} \label{subsec:bound-insight}

The bound in Theorem~\ref{thm:main-result} is independent of the exact values of the data in the pool and depends only on the metadata of the model: pool size $n$, number of attributes $m$ and  model structure $G$. This implies that the analysis is robust to varying the details of the dataset, but it is expressive enough to capture and resolve questions like the following:
\begin{itemize}
	\item Which one of many model structures has the largest/smallest leakage? 
	\item What is the additional leakage caused by releasing one more attribute for each data point in the pool? 
	\item How do the dependencies among a certain group of attributes affect the leakage?
	\item How exactly does the pool size affect leakage? 
\end{itemize}

Using the bound, we can observe and quantify the effect of releasing a model in terms of its complexity $C(G)$. Releasing more parameters helps the attacker, and we also see that e.g. quadrupling $C(G)$ would double the sum $z_{\alpha} + z_{1-\beta}$, thus reducing the error or increasing the power or both. The amount of improvement depends on how large the sum already is and there are diminishing returns. In contrast, increasing the pool size $n$ has the opposite effect to increasing $C(G)$: the attack performance becomes worse. This makes sense, as a larger pool is more similar to (has more overlap with) the population, so it is more difficult for the attacker to distinguish between them.

It is also possible to see whether a heuristic attack can be improved by comparing its error and power to the ones implied by the main theorem for a given complexity and pool size. From the heuristic attack's error and power, we can compute the corresponding Standard Normal quantiles and compare their sum to $\sqrt{\frac{C(G)}{n}}$. If the sum is far from the bound, then the attack can be improved. From a defender's point of view, we can quantify the maximum leakage associated with releasing various models \textbf{without} having to train each model and \textbf{without} having to perform any attack. We can reason about the ultimate/maximum power of the attacker, e.g. one with perfect knowledge about the population, so as to guide our choice of a model to release.

It is also interesting to note that the natural complexity behind the structure of a graphical model captures its privacy risk. The difference between an estimated parameter value (calculated from the pool) and the actual parameter value (calculated from the general population) is an estimation error that leaks information about the pool. Since these estimation errors are independent across parameters of a Bayesian network, each parameter makes a separate contribution to the power of the attacker. Hence, the complexity measure defined as the number of parameters captures the potential privacy risk of the model. See Appendix~\ref{sec:fisher} for a detailed discussion on why the estimation errors of parameters in graphical models are independent.

\section{Experiments} \label{sec:experiments}
We use two methods for performing and evaluating the attack:
\begin{enumerate}
	\item \textbf{Theoretical:} Given a false positive rate and released model structure $G$, we use our main result (Theorem~\ref{thm:main-result}) to calculate the power, error, and AUC.
	
	\item \textbf{Empirical:} In empirical analysis, we vary the threshold of LR test from $-\infty$ to $+\infty$ and calculate the power at each value of false positive rate. This is the maximum possible power that can be achieved. Hence we use the power and AUC values calculated here to compare with the bound presented in Theorem~\ref{thm:main-result}.
\end{enumerate}

\subsection{Data Sets}
A summary of all the data sets which are used in our experiments is provided in Table~\ref{table:dataDescription}.

\textbf{Location:} This is a binary data set containing the Foursquare location check-ins by individuals in Bangkok~\cite{shokri2017membership}. Each record corresponds to an individual and consists of binary attributes reflecting visits to different locations.
	
\textbf{Purchase:} This is a binary data set containing information about individuals and their purchases~\cite{shokri2017membership}. Each record corresponds to an individual and each attribute represents a product. A value of 1 at attribute $j$ means that the individual purchased the product corresponding to attribute $j$.  
	
\textbf{Genome:} OpenSNP\footnote{\url{https://opensnp.org/snps}} is an open source data sharing website, where people can share their genomic data test results. We obtained the data provided by OpenSNP and considered only the individuals sequenced by 23andme. We randomly selected 1000 SNPs on chromosome 1. Individuals with more than 2 missing values were filtered out. After this pre-processing, we were left with 2497 individuals and 1000 SNPs for each individual.
	
Bayesian Networks have been used to model genome sequences in \cite{agrahari2018applications,su2013using}. We use a similar approach to model the SNPs as a Bayesian Network. Since humans are diploid, at each position, we have two bases i.e. three possible values. While releasing graphical models constructed from genomic data, we only estimate the Minor and Major Allele Frequencies. To calculate the probability of any combination, we assume independence and compute it as the product of Allele Frequencies. 

\textbf{Data augmentation and Evaluation method:} As the size of the original dataset is small for Location and Genome data, we augment the original dataset with synthetic data sampled independently from a Bayesian network with learned $\eta = 3$ (maximum number of parents per node) on the original dataset. We use the full augmented set as the general population. See Appendix~\ref{sec:pgm-appendix} for details on data synthesis, structure learning, and parameter learning in Bayesian networks. In all the experiments, the pool and reference population are sampled independently from the general population. To evaluate the attack, all available samples from the general population are used to compute power and false positives. The pool size and reference population size for experiments with the Location and Purchase datasets are 3000 and 15000 respectively. The pool size and reference population size for experiments with the Genome dataset are 1000 and 5000 respectively. We perform each experiment with 50 different and independent splits of pool and reference population and report the average statistics. \textbf{This random splitting and averaging ensures that the results are not biased by data augmentation or by a single instance of sampling the pool.}
	\begin{table}[]
		\centering
		\resizebox{0.7\columnwidth}{!}{%
			\begin{tabular}{cccc}
				\thickhline
				\textbf{Data Set} & \textbf{\# Attributes} & \textbf{Original Size} & \textbf{Augmented Dataset Size} \\ \thickhline
				Location          & 446                        & 5010               & 30000                     \\ 
				Purchase          & 600                        & 30000               & 30000                    \\ 
				Genome       & 1000                       & 2497               & 10000                     \\ \hline
			\end{tabular}
		}
		\caption{\small{\textbf{Summary of Datasets used:} As the size of the original dataset is small for Location and Genome data, we augment the original dataset with synthetic data generated independently from a Bayesian network with $\eta = 3$ learned on the original dataset, where $\eta$ is the maximum number of parents a node can have.  We use the full augmented set as the general population. See Appendix~\ref{sec:pgm-appendix} for complete details on data synthesis.}}
		\label{table:dataDescription}
		
	\end{table}

		

\begin{table*}[]
	
	\centering 
		
		\resizebox{1\columnwidth}{!}{%
			\begin{tabular}{ccccccc}
				\thickhline
				\textbf{Data set} & \textbf{No. of Nodes} & \textbf{$\eta$} & \textbf{No. of Edges} & \textbf{Complexity} & \textbf{AUC (Empirical)}  & \textbf{AUC (Theoretical)} \\ \hline \hline
				\multirow{4}{*}{Location} &\multirow{4}{*}{446}
				& 0                   & 0                    & 446     & 0.5928	& 0.6074           \\ 
				&	& 1                   & 343                  & 789   & 0.6337	& 0.6415              \\ 
				&	& 2                   & 566                  & 1222    & 0.6655 & 0.6741            \\ 
				&	& 3                   & 757                  & 1905  & 0.6998 & 0.7134              \\ \hline
				\multirow{4}{*}{Purchase} &\multirow{4}{*}{600}
				& 0                   & 0                    & 600   & 0.5700		& 0.6241              \\ 
				&	& 1                   & 496                  & 1096 & 0.6266		& 0.6654               \\ 
				&	& 2                   & 941                  & 1942  & 0.6885 & 0.7153               \\ 
				&	& 3                   & 1358                 & 3431 & 0.7541		& 0.7752               \\ \hline
				\multirow{4}{*}{Genome} &\multirow{4}{*}{1000}
				& 0                  & 0                    & 1000  & 0.6729		& 0.7602               \\ 
				&	& 1                  & 729                  & 1729 & 0.7875		& 0.8237               \\ 
				&	& 2                  & 1244                 & 2706  & 0.8495		& 0.8776            \\ 
				&	 & 3                  & 1712                 & 4323  & 0.9058		& 0.9292              \\ \thickhline
				
			\end{tabular}
		}
		\caption{\small{\textbf{AUC comparison for model structures we learned on different datasets, with different complexities}. We compare the AUC values for empirical attack with the corresponding values computed using the theoretical bound. The variable $\eta$ represents maximum number of parents a node can have in the graph. We can observe that the empirical values of AUC are closer to the bound and increase with increasing complexity of the model.}}
		\label{table:structData}
		\end{table*}

\subsection{Validity of the theoretical bound} \label{subsec:boundValidity}
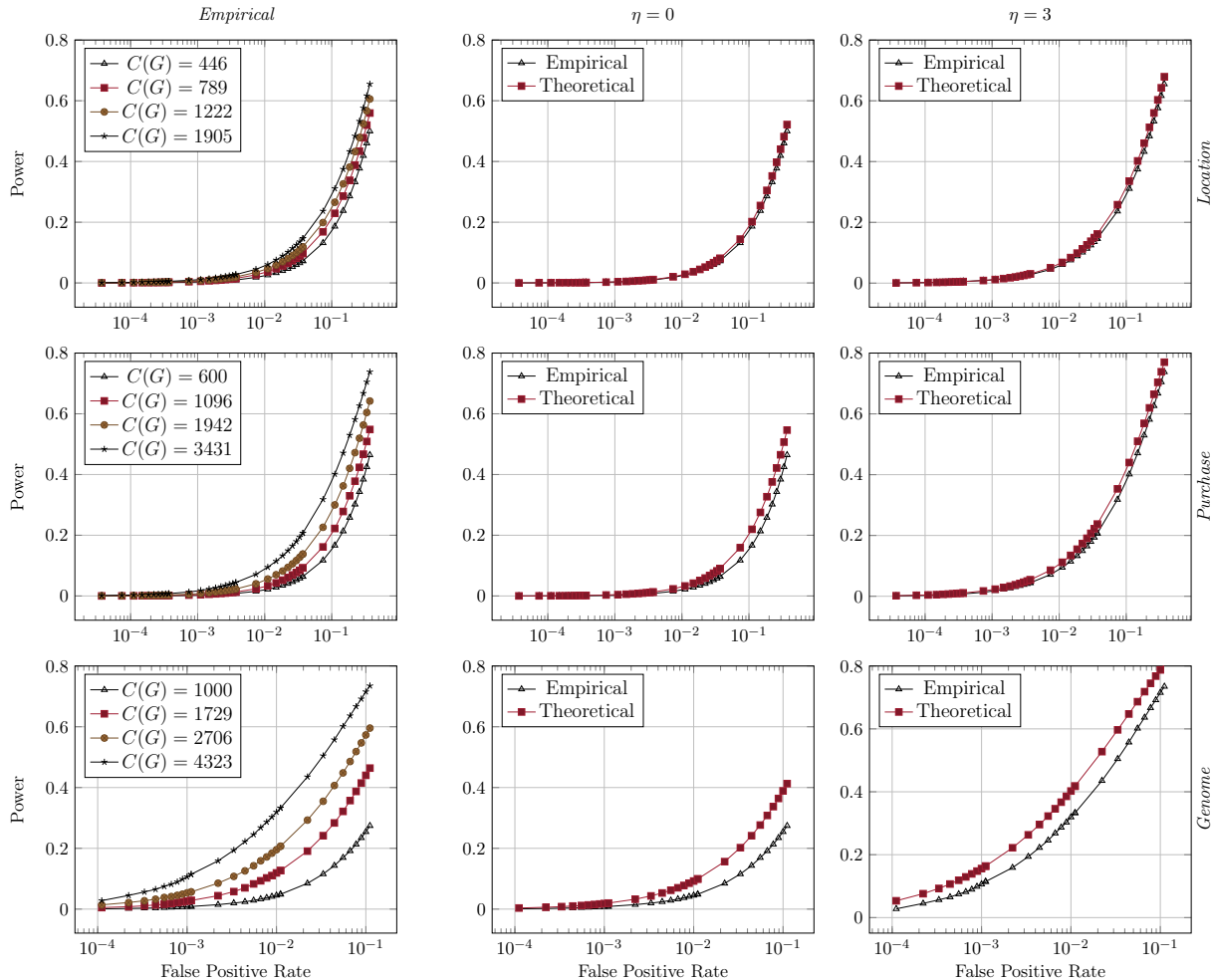
\begin{figure*}[!htb]
	\centering
	\resizebox{1\columnwidth}{!}{%
		\begin{tabular}{l@{\hskip 0.5in}lll}
			
			\begin{tikzpicture}
			\begin{semilogxaxis}
			[legend style={font=\large}, title={\em Empirical},xlabel={},ylabel={Power}, grid = major, legend entries = {$C(G) = 446$, $C(G) =789$, $C(G)= 1222$,$C(G)= 1905$}, ymax = 0.8, legend pos =  north west]
			\addplot[mark = triangle] table {"Data/Bangkok/Mean_N_Parents0.dat"};
			\addplot table {"Data/Bangkok/Mean_N_Parents1.dat"};
			\addplot table {"Data/Bangkok/Mean_N_Parents2.dat"};
			\addplot table {"Data/Bangkok/Mean_N_Parents3.dat"};
			\end{semilogxaxis}
			\end{tikzpicture} &
			
			\begin{tikzpicture}
			\begin{semilogxaxis}
			[legend style={font=\large}, title= {$\eta = 0$},xlabel={},ylabel={}, grid = major, legend entries = {Empirical, Theoretical}, ymax = 0.8,legend pos =  north west]
			\addplot[mark = triangle] table {"Data/Bangkok/Mean_N_Parents0.dat"};
			\addplot table {"Data/Bangkok/exact_theory_threshold0.dat"};
			\end{semilogxaxis}
			\end{tikzpicture} &
			
			\begin{tikzpicture}
			\begin{semilogxaxis}
			[legend style={font=\large}, title= {$\eta = 3$},xlabel={},ylabel={}, ylabel right ={\em Location}, grid = major, legend entries = {Empirical, Theoretical},ymax = 0.8, legend pos =  north west]
			\addplot[mark = triangle] table {"Data/Bangkok/Mean_N_Parents3.dat"};
			\addplot table {"Data/Bangkok/exact_theory_threshold3.dat"};
			\end{semilogxaxis}
			\end{tikzpicture}
			
			\\
			\begin{tikzpicture}
			\begin{semilogxaxis}
			[legend style={font=\large},title= {},xlabel={},ylabel={Power}, grid = major, legend entries = {$C(G)= 600$, $C(G)= 1096$, $C(G)= 1942$, $C(G)= 3431$}, ymax = 0.8, legend pos =  north west]
			\addplot[mark = triangle] table {"Data/Purchase/Mean_N_Parents0.dat"};
			\addplot table {"Data/Purchase/Mean_N_Parents1.dat"};
			\addplot table {"Data/Purchase/Mean_N_Parents2.dat"};
			\addplot table {"Data/Purchase/Mean_N_Parents3.dat"};
			\end{semilogxaxis}
			\end{tikzpicture} &
			
			\begin{tikzpicture}
			\begin{semilogxaxis}
			[legend style={font=\large}, title= {},xlabel={},ylabel={}, grid = major, legend entries = {Empirical, Theoretical}, ymax = 0.8,legend pos =  north west]
			\addplot[mark = triangle] table {"Data/Purchase/Mean_N_Parents0.dat"};
			\addplot table {"Data/Purchase/exact_theory_threshold0.dat"};
			\end{semilogxaxis}
			\end{tikzpicture} &
			
			\begin{tikzpicture}
			\begin{semilogxaxis}
			[legend style={font=\large}, title= {},xlabel={},ylabel={}, grid = major, ylabel right ={\em Purchase}, legend entries = {Empirical, Theoretical},ymax = 0.8, legend pos =  north west]
			\addplot[mark = triangle] table {"Data/Purchase/Mean_N_Parents3.dat"};
			\addplot table {"Data/Purchase/exact_theory_threshold3.dat"};
			\end{semilogxaxis}
			\end{tikzpicture}
			\\
			
			\begin{tikzpicture}
			\begin{semilogxaxis}
			[legend style={font=\large},title= {},xlabel={False Positive Rate},ylabel={Power}, grid = major, legend entries = {$C(G)= 1000$, $C(G)= 1729$, $C(G)= 2706$, $C(G)= 4323$ }, ymax = 0.8, legend pos =  north west]
			\addplot[mark = triangle] table {"Data/Genome_1000/Mean_N_Parents0.dat"};
			\addplot table {"Data/Genome_1000/Mean_N_Parents1.dat"};
			\addplot table {"Data/Genome_1000/Mean_N_Parents2.dat"};
			\addplot table {"Data/Genome_1000/Mean_N_Parents3.dat"};
			\end{semilogxaxis}
			\end{tikzpicture} &
			
			\begin{tikzpicture}
			\begin{semilogxaxis}
			[legend style={font=\large}, title= {},xlabel={False Positive Rate},ylabel={}, grid = major, legend entries = {Empirical, Theoretical }, ymax = 0.8,legend pos =  north west]
			\addplot[mark = triangle] table {"Data/Genome_1000/Mean_N_Parents0.dat"};
			\addplot table {"Data/Genome_1000/exact_theory_threshold0.dat"};
			\end{semilogxaxis}
			\end{tikzpicture} &
			
			\begin{tikzpicture}
			\begin{semilogxaxis}
			[legend style={font=\large}, title= {},xlabel={False Positive Rate},ylabel={}, ylabel right ={\em Genome}, grid = major, legend entries = {Empirical, Theoretical},ymax = 0.8, legend pos =  north west]
			\addplot[mark = triangle] table {"Data/Genome_1000/Mean_N_Parents3.dat"};
			\addplot table {"Data/Genome_1000/exact_theory_threshold3.dat"};
			\end{semilogxaxis}
			\end{tikzpicture}

		\end{tabular}
	}
	\caption{\small{\textbf{Power of Attack on Real world data:} The effect of model complexity on the power of the attack is shown in the first column. When graphical models of increasing complexity are released, the power of attack increases, as we have more parameters that can leak information.  In the second and third columns, we compare the observed powers with their corresponding theoretical bounds for models with $\eta = 0$ and $\eta = 3$ respectively. We can clearly observe that the two curves (empirically observed power and theoretical bound) are very close to each other demonstrating the validity and usefulness of the bound.}}
	\label{table: resultsPlot}
\end{figure*}

We first present how the complexity of released graphical model affects the power of tracing attack. Table~\ref{table:structData} and Figure~\ref{table: resultsPlot} (column 1) show the AUC and power respectively of the tracing attack when models of various complexities are released. In Figure~\ref{table: resultsPlot} (columns 2, 3), we compare the observed power of tracing attack with the bound from Theorem~\ref{thm:main-result}. Columns 2 and 3 correspond to releasing Bayesian Networks learned with $\eta = 0$ and $\eta = 3$ respectively. The variable $\eta$ represents maximum number of parents a node can have in the graph. 

As shown in Table~\ref{table:structData}, the AUC values are comparatively smaller for the Purchase data set compared to that of the Genomes dataset. This is because, in case of Purchase data, we only have 600 attributes for a pool size of 3000. In case of Genomic data, we have 1000 attributes for a pool size of 1000. Even then, on Purchase data, if a Bayesian network with $\eta = 3$ is released, we can achieve an AUC value of approximately 0.75, compared to 0.57 when only marginals are released. The complexity of a graphical model represents the number of independent parameters in the model. Each of these parameters is learned from individuals in the pool and hence can leak more information about membership in the pool. This leakage contributes to the power of the tracing attack. Hence, we can confirm that the higher the complexity of the released model, the higher the power of the tracing attack.

We can clearly observe that the empirical and theoretical power in column 2 and 3 of Figure~\ref{table: resultsPlot} are very close to each other except for the case of $\eta = 0$ on genome data. When $\eta = 0$, the model cannot capture any dependency in the data. If the released model does not capture all the dependencies among attributes in data (under fitted), then estimation errors of parameters in released graphical model become correlated. This effectively reduces the amount of information available to perform membership inference. Overall, we can see that the theoretical bound can capture the empirically observed power very effectively. \textbf{The observed power of tracing attack is close to the value calculated from the theoretical bound demonstrating its validity and usefulness.}

\section{Related Work}
Homer et al. \cite{homer2008resolving} developed a statistical test based on likelihood ratio for inferring the presence of a genome sequence, given the Allele frequencies. Sankararaman et al. \cite{sankararaman2009genomic} extend this work and provide tight bounds on the power of tracing attack for any adversary. The test statistic in \cite{sankararaman2009genomic} is based on likelihood ratio. This was further extended for continuous Gaussian variables (Micro RNA) data in \cite{backes2016membership}. Similar attacks on genomic data using statistics published in association studies are performed in \cite{shringarpure2015privacy,wang2009learning}. Dwork et al. \cite{dwork2015robust} take a different approach and provide a generic framework for tracing attacks based on distance metric, when noisy statistics are released. In \cite{im2012sharing}, the authors use a correlation statistic to perform a tracing attack against regression coefficients from quantitative phenotypes. All of \cite{homer2008resolving,sankararaman2009genomic,dwork2015robust,backes2016membership,im2012sharing} assume independence among data attributes, whereas our work addresses the case of dependent attributes.

Shokri et al. \cite{shokri2017membership} perform membership inference attacks against black-box machine learning models. The adversary in \cite{shokri2017membership} constructs \textit{shadow models} that mimic the behavior of the target model. The attack is treated as a binary classification problem and the decision rule is a machine learning model trained on data from the shadow models. Salem et al.  \cite{salem2018ml} follow a similar framework as \cite{shokri2017membership} but relax certain assumptions on the knowledge and power of adversary. Similar attacks were performed against aggregate location data \cite{pyrgelis2017knock}, generative adversarial networks \cite{hayes2018logan} and in a collaborative learning setting \cite{melis2018exploiting,nasr2019comprehensive}. These works provide empirical analysis of tracing attack on complex models. A theoretical formulation of Bayes-optimal attack for membership inference against neural networks was given in \cite{sablayrolles2019white}, which shows that existing techniques based on shadow models \cite{shokri2017membership} are approximations of this optimal attack. In \cite{sablayrolles2019white} it is shown that the power of optimal attack on black box models is the same as that on white box models, but no bound on this power is provided. We present a theoretical bound on the power of the attack, which is independent of the training data and the auxiliary knowledge of the adversary.

Differential Privacy \cite{dwork2006calibrating} has been accepted as the de facto standard notion of privacy. Zhang et al. \cite{zhang2017privbayes} learn a Bayesian network in a differentially private way and then use a noisy version of it to generate synthetic data. The authors in \cite{bindschaedler2017plausible} introduce the notion of plausible deniability and propose a mechanism that achieves it by generating synthetic data that is statistically similar to the given input data set. By definition, differential privacy decreases the power of tracing attack and its effect on our bound is discussed in Appendix~\ref{sec:discussion-dp}.



\section{Summary}
We provide a theoretical analysis of tracing attacks against probabilistic graphical models to address the existing gap between theoretical analysis for simple average statistics on data with independent attributes and empirical demonstrations for complex models on data with correlated attributes. Our bound quantifies the maximum attack performance measured with the error (false positive rate) and power (true positive rate) of a likelihood-ratio test. We experimentally validate and complement our results using sensitive datasets - location check-ins, purchase history, genomic data.
\section*{Acknowledgments}
 This work is partially supported by the Singapore Ministry of Education Academic Research Fund, R-252-000-660-133, the NUS Early Career Research Award (NUS ECRA), grant number NUS ECRA FY19 P16, and the National Research Foundation, Singapore under its Strategic Capability Research Centres Funding Initiative. 

\clearpage
\newpage
\appendix
\section{Table of Notations}
\begin{table}[!h]
	\centering
	\resizebox{0.7\columnwidth}{!}{%
		\begin{tabular}{ccc}
			\hline
			\textbf{Symbol} & \textbf{Description}\\
			$m$          		& Number of attributes\\
			$n$   		    	& Pool (training set) size\\ 
			$\pgmpool$			& Released Model\\
			$\pgmpop$ 		& Population Model\\ 
			$X_i$				& Random variable for attribute $i$\\
			$x_i$				& A particular value for $X_i$\\
			$V(X_i)$	 		& Set of possible values of attribute $i$\\
			$Pa_{X_i}^G$		& Set of random variables that are parents of node $X_i$ in $G$\\
			$p_i^v$ 	& $\Pr(x_i = 1 | Pa_{X_i}^G = v; \theta)$\\
			$\hat{p}_i^v$ 	& $\Pr(x_i = 1 | Pa_{X_i}^G = v; \hat{\theta})$\\
			$C(G)$		& Complexity of $G$ (number of independent parameters)\\
			$\eta$		& Maximum number of parents per node in $G$\\
			$L(x)$		& Log-likelihood ratio statistic \eqref{eq:log-likelihood-ratio} for data sample $x$ \\
			$F$			& CDF of $L(x)$ over the general population (under $\hout$)\\
			$\alpha$    & Error (False Positive Rate) of LR tracing attack \\
			$\beta$     & Power (True Positive Rate) of LR tracing attack \\ 
			$z_s$       & Quantile at level $1 - s$ of the Standard Normal distribution \\ \hline
		\end{tabular}
	}
	\caption{\textbf{Notations}}
	 \vspace{-2em}

	\label{table:notations}
\end{table}

\clearpage
\newpage
\section{Derivation of mean and variance of the likelihood ratio}
\label{subsec:derivation-mean-variance}
We compute the mean and variance of $L(x)$ under the two hypotheses. We sketch the proof for the mean $\E(L)$ under the population hypothesis, followed by the variance $\Var(L)$. Similar calculations apply for the pool hypothesis.

Let the target $x$ have the feature vector $(x_1, x_2, \ldots, x_m)$, and let us assume, for now, that all attributes are binary: $x_i \in \{0, 1\}, i = 1, \ldots, m$. In Appendix~\ref{app:categorical} we generalize to attributes that can take more than two values. We can take advantage of the Bayesian network decomposition to write the log-likelihood ratio for $x$ as follows:
\begin{align*}
L(x) =& \log\left[\frac{\Pr(x ; \pgmpop)} {\Pr(x ; \pgmpool)}\right]
= \sum_{i=1}^{m} L_i \numberthis
\end{align*}
where $L_i$ is the contribution of $X_i$ to the likelihood ratio, as defined as:
\begin{align*}
L_i =& \log\left(\frac{\Pr[X_i | \parpop]}{\Pr[X_i | \parpool]}\right) \\
=& \sum_{v \in V(Pa_{X_i}^G)} \ind{i}{v} \underbrace{\left(x_i \log \frac{p_i^v}{\hat{p}_i^v} + (1 - x_i) \log \frac{1 - p_i^v}{1 - \hat{p}_i^v} \right)}_\text{$L_i^v$}\\
=& \sum_{v \in V(Pa_{X_i}^G)} \ind{i}{v} L_i^v \numberthis \label{eq:likelihood-contribution}
\end{align*}
where $p_i^v = \Pr\{X_i = 1 |Pa_{X_i}^G = v;\theta\}$, and similarly $\hat{p}_i^v = \Pr\{X_i = 1 |Pa_{X_i}^{G} = v;\hat{\theta}\}$. The notation $\ind{i}{v}$ is an indicator variable for a particular assignment of values to the parent nodes of $X_i$, i.e. $\ind{i}{v} = 1$ if $Pa_{X_i}^G = v$ and 0 otherwise. The sum ranges over $|V(Pa_{X_i}^G)|$ terms $L_i^v$, one for each element of $V(Pa_{X_i}^G)$.

The parameters $\theta$ and $\hat{\theta}$ are estimated from data (reference population and pool, respectively). By the central limit theorem, the distribution of such an estimate converges to a Gaussian around the mean value of the estimate as the number of data samples increases.
In our derivations of the mean and variance, we use this approximation in \eqref{eq:approx-nis} and \eqref{eq:approx-hat-pis}. By the Berry-Esseen theorem \cite{berry1941accuracy,esseen1942liapunoff}, the rate of convergence to the Gaussian is $O(\frac{1}{\sqrt{n}})$ if the third moment of the random variable being sampled is finite. In our case this condition is true, because each random variable can only take a finite number of possible finite values.

We compute the mean and variance of $L(x)$ as follows:
\begin{subequations}
\begin{align}
&\E_{pop}(L)    =  \frac{C(G)}{2n} + O(C(G) n^{-2})\\
&\E_{pool}(L)   = -\frac{C(G)}{2n} + O(C(G) n^{-2})\\
&\Var_{pop}(L)  =  \frac{C(G)}{n}  + O((C(G))^2 n^{-2})\\
&\Var_{pool}(L) =  \frac{C(G)}{n}  + O((C(G))^2 n^{-2}).
\end{align}\label{eq:mean-variance-approximate}
\end{subequations}
\begin{proof}[Proof sketch - Mean under $\hout$]
	The mean $\E_{pop}(L)$ can be computed as follows:
	\begin{align*}
	\E_{pop}(L) =& \sum_{i=1}^{m} \E_{pop}(L_i)\\
	=& \sum_{i=1}^{m} \sum_{v} \E_{pop}(\ind{i}{v} L_i^v) \numberthis
	\end{align*} 
	Using approximation \eqref{eq:approx-E-Lis} in Appendix Section~\ref{app:mean}, we compute $\E_{pop}(\ind{i}{v} L_i^v) \approx \frac{1}{2n} + O(n^{-2}) $.
	Since the total number of $L_i^v$ parameters is $C(G) = \sum_{i=1}^{m} |V(Pa_{X_i}^G)|$, we conclude that 
	\begin{equation}\label{eq:mean-inproof}
	\E_{pop}(L) = \frac{C(G)}{2n} + O(C(G)n^{-2}).
	\end{equation}
	
\end{proof}
\begin{proof}[Proof sketch - Variance under $\hout$]
	By definition,
	\begin{equation}
	\Var_{pop}(L) = \E_{pop}[L^2] - (\E_{pop}[L])^2.
	\end{equation}
	The latter term $(\E_{pop}[L])^2$ is the square of the mean, which we compute in \eqref{eq:mean-inproof}.
	The former term $\E_{pop}[L^2]$ decomposes as follows:
	\begin{equation}\label{eqn:variance-eqn1}
	\E_{pop}[L^2] = \sum_{i=1}^{m} \E_{pop}[L_i^2] + 2\sum_{1\leq i < j \leq m} \E_{pop}[L_i L_j]
	\end{equation}
	
	We compute $\E_{pop}[L_i^2]$ by expanding $\E_{pop}[(\sum_{v} \ind{i}{v} L_i^v)^2]$. Then, approximation \eqref{eq:approxTSquare} in Appendix~\ref{app:variance} gives us that each square term $\E_{pop}[(\ind{i}{v} L_i^v)^2]$ is approximately equal to $\frac{1}{n}$. As for the product terms in the expansion, each term multiplies two different indicator variables $\ind{i}{v}$ and $\ind{i}{v'}$ with $v \neq v'$. Because at most one of the two is equal to 1, all product terms will be zero. Hence $\E_{pop}[L_i^2] = |V(Pa_{X_i}^G)| \times \frac{1}{n}$.
	
	The number of joint terms $\E_{pop}[L_i L_j]$ is $O((C(G))^2)$. From the approximation in Appendix Section~\ref{app:Ti2-TiTj} for $\E_{pop}[L_i L_j]$, each of these terms is equal to $\frac{1}{4n^2}$ with error term $O(n^{-2})$. Hence, the value of $\E_{pop}[L^2]$ is 
	\begin{equation}
	E_{pop}[L^2] = \frac{C(G)}{n} + \frac{(C(G))^2}{4n^2} + O((C(G))^2 n^{-2}).
	\end{equation}
	We conclude that the variance is
	\begin{align*}
	\Var_{pop}(L) =& \E_{pop}[L^2] - (\E_{pop}[L])^2\\
	=& \frac{C(G)}{n} + \frac{(C(G))^2}{4n^2} + O((C(G))^2 n^{-2}) - \left(\frac{C(G)}{2n} + O(C(G)n^{-2})\right)^2\\
	=& \frac{C(G)}{n} + O((C(G))^2 n^{-2}) \numberthis
	\end{align*}
\end{proof}

Although we haven't provided the calculation here, it is possible to calculate the exact value of the $O((C(G))^2 n^{-2})$ term from the released model. As a simple example, in appendix \ref{app:naive-bayes}, we calculate the exact value of this $O((C(G))^2n^{-2})$ term, when the released model is a Naive Bayes model.

\subsection{Distribution of the log-likelihood ratio:}

To compute the distribution of $L(x)$, the log-likelihood ratio of the parameter vector estimate with the actual value of the parameter vector in a graphical model, we need to understand what parameters contribute to the likelihood ratio given a data sample. As shown in equation~\eqref{eq:likelihood-contribution}, the parameter that contributes to the likelihood ratio for an attribute is determined by the value taken by its parent node. The contribution of attribute $X_i$ to the likelihood function, denoted by $L_i$, is computed as:
\begin{align*}
L_i = \sum_{v \in V(Pa_{X_i}^G)} \ind{i}{v} L_i^v 
\end{align*}

Hence the distribution of $L_i$ is a mixture of the distributions of $L_i^v$, where the mixing probabilities are determined by the distribution of the parent nodes (hence the dependence of $L(x)$'s distribution on the probability distribution that generated the data). The distribution of $L_i^v$, the log-likelihood ratio for the estimate of a single parameter value, is asymptotically a chi-squared distribution with degree of freedom 1 (from Wilks' theorem). Hence, the exact distribution of log-likelihood ratio is a sum of mixture of chi-squared distributions, where the mixing distribution is dependent on the distribution that generated the data. In case of high-dimension models, this log-likelihood ratio distribution is very close to normal distribution (as it is a sum of large number of independent random variables (which are sum of mixtures themselves)). Hence, using the first two moments, that do not depend on the exact distribution of the sensitive data, is sufficient to produce a generic data-independent upper-bound on the privacy risk of learning the graphical model.

\section{Number of Samples for Estimating Conditional Probabilities}

We use $\hat{p}_i^v$ to denote the estimated conditional probability that $X_i = 1$, given that the values of the activator variables are $Pa_{X_i}^G=v$.
The number of samples $n_i^v$ used to compute $\hat{p}_i^v$ are approximately Gaussian around $np_{v}$ ($n$ is the pool size, and $p_{v}$ is the probability of $Pa_{X_i}^G = v$ in the general population):
\begin{equation}\label{eq:approx-nis}
n_i^v \approx np_{v} + \sqrt{np_{v} (1 - p_{v})} Z_1,
\end{equation} 
where $Z_1$ is a standard Gaussian random variable. In parallel, the value of $\hat{p}_i^v$ is also approximately Gaussian around the true value $p_i^v$:
\begin{equation}\label{eq:approx-hat-pis}
\hat{p}_i^v \approx p_i^v + \sqrt{\frac{p_i^v(1-p_i^v)}{n_i^v}}Z_2,
\end{equation}
where $Z_2$ is a standard Gaussian random variable.

Using these two approximations, we now prove the results required for derivation of LR statistic mean and variance.
\section{Approximation for mean derivation}\label{app:mean}

As explained in section~\ref{subsec:derivation-mean-variance}, to compute the mean of the likelihood ratio we need the average contribution of each $L_i^v$ i.e. value of $E_{pop}[\ind{i}{v} L_i^v]$. Here we prove that $E_{pop}[\ind{i}{v} L_i^v]$, when the expectation is over population is approximately equal to $\frac{1}{2n}$. When expectation is over pool, the derivation steps are  similar and the value is $-\frac{1}{2n}$.

\begin{lemma}
	We will prove the following result:
	\begin{equation}\label{eq:approx-E-Lis}
	E_{pop}[\ind{i}{v} L_i^v] \approx \frac{1} {2n} \left( 1 + \frac{1 - p_{v}}{np_{v}}\right)
	\end{equation}
\end{lemma}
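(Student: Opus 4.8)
The plan is to exploit the independence between the target $x$ (drawn from the population) and the pool from which $\hat{\theta}$ is estimated, and then recognize the remaining quantity as an expected Kullback--Leibler divergence between two nearby Bernoulli distributions. First I would condition on the event $\{Pa_{X_i}^G = v\}$ for the target $x$, which has probability $p_{v}$; on this event $x_i$ is $\mathrm{Bernoulli}(p_i^v)$ and, crucially, independent of $\hat{p}_i^v$. Taking the expectation over $x_i$ first gives $\E_{pop}[L_i^v \mid Pa_{X_i}^G = v] = p_i^v \log\frac{p_i^v}{\hat{p}_i^v} + (1-p_i^v)\log\frac{1-p_i^v}{1-\hat{p}_i^v} = D_{\mathrm{KL}}\!\left(\mathrm{Bern}(p_i^v)\,\|\,\mathrm{Bern}(\hat{p}_i^v)\right)$, so that $\E_{pop}[\ind{i}{v} L_i^v] = p_{v}\,\E_{\hat{p}}\!\left[D_{\mathrm{KL}}(\mathrm{Bern}(p_i^v)\,\|\,\mathrm{Bern}(\hat{p}_i^v))\right]$, where the outer expectation is over the randomness of the pool.

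Next I would Taylor-expand the KL divergence in its second argument around $\hat{p}_i^v = p_i^v$. Since $q \mapsto D_{\mathrm{KL}}(\mathrm{Bern}(p)\,\|\,\mathrm{Bern}(q))$ is minimized (with value $0$) at $q = p$, the linear term vanishes, and the second-order coefficient is the Bernoulli Fisher information $1/(p(1-p))$; hence $\E_{\hat{p}}[D_{\mathrm{KL}}] \approx \frac{1}{2 p_i^v(1-p_i^v)}\,\E[(\hat{p}_i^v - p_i^v)^2]$. Plugging in the Gaussian approximation \eqref{eq:approx-hat-pis} gives $\E[(\hat{p}_i^v - p_i^v)^2] \approx p_i^v(1-p_i^v)\,\E[1/n_i^v]$, and the factor $p_i^v(1-p_i^v)$ cancels, leaving $\E_{\hat{p}}[D_{\mathrm{KL}}] \approx \tfrac{1}{2}\,\E[1/n_i^v]$. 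Finally I use \eqref{eq:approx-nis} to write $n_i^v = np_{v} + \sqrt{np_{v}(1-p_{v})}\,Z_1$ and Taylor-expand $t \mapsto 1/t$ about $t = np_{v}$ to second order: the first-order term has zero mean and the second-order term contributes $\Var(n_i^v)/(np_{v})^3 = (1-p_{v})/(n^2 p_{v}^2)$, so $\E[1/n_i^v] \approx \frac{1}{np_{v}}\big(1 + \frac{1-p_{v}}{np_{v}}\big)$. Multiplying by $p_{v}$ yields $\E_{pop}[\ind{i}{v} L_i^v] \approx \frac{1}{2n}\big(1 + \frac{1-p_{v}}{np_{v}}\big)$, and dropping the second-order correction recovers the $\tfrac{1}{2n}$ used in the mean derivation.

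The main obstacle is controlling the remainders so that every $\approx$ is justified at the claimed order. The expansions of $D_{\mathrm{KL}}$ and of $1/t$ blow up near $p_i^v \in \{0,1\}$ and near $n_i^v = 0$, which is precisely why the standing assumptions — each released parameter is non-trivial (bounded away from $0$ and $1$) and is estimated from enough samples for the CLT to apply — are invoked: they keep $p_i^v(1-p_i^v)$ and $n_i^v/(np_{v})$ away from degeneracy with overwhelming probability. Under the Gaussian approximations the cubic remainders have (approximately) vanishing odd moments and quartic size $O(n^{-2})$, so the neglected terms are $O(n^{-2})$; a fully rigorous version would quantify the Berry--Esseen error already used in Appendix~\ref{subsec:derivation-mean-variance} and add a truncation argument to handle the small-probability event that $n_i^v$ is far from $np_{v}$. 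The computation under the pool hypothesis is identical, except that $x_i$ is then $\mathrm{Bern}(\hat{p}_i^v)$, producing $-D_{\mathrm{KL}}(\mathrm{Bern}(\hat{p}_i^v)\,\|\,\mathrm{Bern}(p_i^v))$ and hence $-\tfrac{1}{2n}$ up to the same order.
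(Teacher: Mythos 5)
Your proposal is correct and follows essentially the same route as the paper's proof: condition on the parent configuration and on the pool (tower property, contributing the factor $p_{v}$), Taylor-expand the resulting log-ratio to second order in $\hat{p}_i^v$ around $p_i^v$ via the CLT approximation \eqref{eq:approx-hat-pis} to obtain $\tfrac{1}{2}\E[1/n_i^v]$, and then Taylor-expand $1/n_i^v$ around $np_{v}$ via \eqref{eq:approx-nis} to get the $\frac{1}{np_{v}}\left(1+\frac{1-p_{v}}{np_{v}}\right)$ factor. The only real difference is presentational: you expand the Bernoulli KL divergence as a function of its second argument, so the linear term vanishes identically (optimality at $q=p$) and the quadratic coefficient is the Fisher information $1/(p_i^v(1-p_i^v))$, whereas the paper expands the two logarithms separately and relies on $\E[Z_2]=0$ to kill the first-order contribution --- your packaging is slightly cleaner and sidesteps the sign slip in \eqref{eq:approx_1mpis_log}, whose linear term should carry a $+$ sign and cancel against that of \eqref{eq:approx_pis_log} (harmless here, since that term has zero mean either way).
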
	
\begin{proof}
	We first observe that
	\begin{align*}
	E_{pop}[\ind{i}{v} L_i^v] =& E_{\hat{p}_i^v} \left[E_x\left[\ind{i}{v} L_i^v \mid \hat{p}_i^v\right] \right] \\
	=& p_{v} E_{\hat{p}_i^v} \left[p_i^v \log \frac{p_i^v}{\hat{p}_i^v} + (1 - p_i^v) \log \frac{1 - p_i^v}{1 - \hat{p}_i^v} \right],
	\end{align*}
	and now all we need to show is that
	\begin{equation}
	\begin{aligned}[b]
	E_{Z_1,Z_2} \left[p_i^v \log \frac{p_i^v}{\hat{p}_i^v} + (1 - p_i^v) \log \frac{1 - p_i^v}{1 - \hat{p}_i^v} \right] \approx \frac{1} {2np_{v}} \left( 1 + \frac{1 - p_{v}}{np_{v}} \right) \label{eq:approx-E_pis_logpis}
	\end{aligned}
	\end{equation}
	We approximate $\hat{p}_i^v$ with \eqref{eq:approx-hat-pis} and we use the Taylor expansion of $\log(1 + x) \approx x - \frac{1}{2}x^2$:
	\begin{align*}
	p_i^v \log \frac{p_i^v}{\hat{p}_i^v} \approx& - p_i^v \log \frac{p_i^v + \sqrt{ \frac{p_i^v(1-p_i^v)} {n_i^v}} Z_2} {p_i^v}\\
	=& - p_i^v \log \left(1 + \sqrt{\frac{1-p_i^v} {n_i^v p_i^v}} Z_2 \right)\\
	\approx& - p_i^v \left(\sqrt{\frac{1-p_i^v} {n_i^v p_i^v}} Z_2 - \frac{1-p_i^v} {2n_i^v p_i^v} Z_2^2\right)\\
	=& - \sqrt{\frac{ p_i^v(1-p_i^v)} {n_i^v}} Z_2 + \frac{1-p_i^v} {2n_i^v} Z_2^2 \numberthis \label{eq:approx_pis_log}
	\end{align*}
	Similarly,
	\begin{equation}\label{eq:approx_1mpis_log}
	(1 - p_i^v) \log \frac{1 - p_i^v}{1 - \hat{p}_i^v} \approx - \sqrt{\frac{ p_i^v(1-p_i^v)} {n_i^v}} Z_2 + \frac{p_i^v} {2n_i^v} Z_2^2
	\end{equation}
	
	Adding \eqref{eq:approx_pis_log} and \eqref{eq:approx_1mpis_log}, we have
	\begin{equation}\label{eq:pis_logpis}
	p_i^v \log \frac{p_i^v}{\hat{p}_i^v} + (1 - p_i^v) \log \frac{1 - p_i^v}{1 - \hat{p}_i^v} \approx - 2\sqrt{\frac{ p_i^v(1-p_i^v)} {n_i^v}} Z_2 + \frac{1} {2n_i^v} Z_2^2
	\end{equation}

	Taking the expectation $E_{Z_2}[.]$, and recalling that $E[Z_2] = 0$ and  $E[Z_2^2] = 1$, we have
	\begin{align*}
	E_{Z_1,Z_2} \left[p_i^v \log \frac{p_i^v}{\hat{p}_i^v} + (1 - p_i^v) \log \frac{1 - p_i^v}{1 - \hat{p}_i^v} \right] = & E_{Z_1}[E_{Z_2}[\ldots | Z_1]]\\
	\approx& E_{Z_1}\left[\frac{1} {2n_i^v}\right] \numberthis \label{eq:approx-iter}
	\end{align*}
	
	We now approximate $n_i^v$ with \eqref{eq:approx-nis} and we use the Taylor expansion of $\frac{1}{1 + x} \approx 1 - x + x^2$:
	\begin{align*}
	\frac{1} {2n_i^v} \approx& \frac{1} {2 (np_{v} + \sqrt{np_{v} (1 - p_{v})} Z_1)}\\
	=& \frac{1} {2np_{v}} \frac{1}{1 + \sqrt{\frac{1 - p_{v}}{np_{v}}} Z_1}\\
	\approx& \frac{1} {2np_{v}} \left( 1 - \sqrt{\frac{1 - p_{v}}{np_{v}}} Z_1 + \frac{1 - p_{v}}{np_{v}} Z_1^2 \right) \numberthis \label{eq:approx-1_over_nis}
	\end{align*}
	Taking the expectation $E_{Z_1}[.]$, and recalling that $E[Z_1] = 0$ and  $E[Z_1^2] = 1$, we have our final result:
	\begin{align*}
	E_{Z_1,Z_2} \left[p_i^v \log \frac{p_i^v}{\hat{p}_i^v} + (1 - p_i^v) \log \frac{1 - p_i^v}{1 - \hat{p}_i^v} \right] \approx \frac{1} {2np_{v}} \left( 1 + \frac{1 - p_{v}}{np_{v}} \right)
	\end{align*}
\end{proof}
\section{Approximation for variance derivation}\label{app:variance}
For calculating the variance of likelihood ratio, we need the expected values of $L_i^2$ and $L_iL_j$. Here we first prove the below approximation and use it to calculate $E(L_i^2)$ and $E(L_iL_j)$. As explained in section~\ref{subsec:derivation-mean-variance}, using these values of $E(L_i^2)$ and $E(L_iL_j)$ in equation \ref{eqn:variance-eqn1} we get the variance of LR statistic.

\begin{lemma}
	We will prove the following approximation:
	\begin{equation}\label{eq:approx-E_pis_logpis_sq}
	\begin{aligned}[b]
	& \E_{\hat{p}_i^v}\left[p_i^v \left( \log \frac{p_i^v}{\hat{p}_i^v}\right)^2 + (1 - p_i^v) \left( \log \frac{1 - p_i^v}{1 - \hat{p}_i^v} \right)^2 \right] \approx  \frac{1} {np_v}  \left( 1 + \frac{1 - p_v}{np_v} \right)
	\end{aligned}
	\end{equation}
\end{lemma}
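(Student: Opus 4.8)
The plan is to follow exactly the route used for the mean approximation \eqref{eq:approx-E-Lis}: substitute the Gaussian surrogates \eqref{eq:approx-hat-pis} for $\hat{p}_i^v$ and \eqref{eq:approx-nis} for $n_i^v$, Taylor-expand the logarithms, and evaluate the resulting polynomial in $Z_1,Z_2$ by iterated expectation $\E_{Z_1}\bigl[\E_{Z_2}[\,\cdot\mid Z_1]\bigr]$, since $n_i^v$ is a function of $Z_1$ and, conditionally on $n_i^v$, the estimate $\hat{p}_i^v$ is a function of $Z_2$. The key simplification I expect is the same one that makes \eqref{eq:approx-E-Lis} work: the bracketed quantity, after expansion, is dominated by a single $\tfrac{1}{n_i^v}Z_2^2$ term whose coefficient does not depend on $p_i^v$.

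First I would carry out the inner expectation over $Z_2$. Dividing the expansions \eqref{eq:approx_pis_log} and \eqref{eq:approx_1mpis_log} by $p_i^v$ and $1-p_i^v$ respectively writes each logarithm as a linear term in $Z_2$ of magnitude $O((n_i^v)^{-1/2})$ plus a $Z_2^2$ term of order $(n_i^v)^{-1}$; the squares of the two linear parts are $\tfrac{1-p_i^v}{n_i^v p_i^v}Z_2^2$ and $\tfrac{p_i^v}{n_i^v(1-p_i^v)}Z_2^2$ respectively (the sign of the linear part is irrelevant once squared), while the remaining terms of each square carry $Z_2^3$ at order $(n_i^v)^{-3/2}$ and $Z_2^4$ at order $(n_i^v)^{-2}$. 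Weighting the first square by $p_i^v$ and the second by $1-p_i^v$ and adding, the $p_i^v$-dependence cancels in the leading term and one is left with $\tfrac{1}{n_i^v}Z_2^2$ plus contributions that are odd in $Z_2$ or of order $(n_i^v)^{-2}$. Taking $\E_{Z_2}$ and using $\E[Z_2^2]=1$ gives $\E_{Z_2}[\,\cdot\mid Z_1]\approx \tfrac{1}{n_i^v}$, exactly twice the $\tfrac{1}{2n_i^v}$ obtained at the analogous point \eqref{eq:approx-iter}, as it should be since the relevant $\tfrac{1}{n_i^v}Z_2^2$ coefficient is now $1$ rather than $\tfrac{1}{2}$.

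Next I would take the outer expectation over $Z_1$. Reusing \eqref{eq:approx-1_over_nis} --- which expands $\tfrac{1}{2n_i^v}$, so I simply double it --- gives $\tfrac{1}{n_i^v}\approx \tfrac{1}{np_v}\bigl(1-\sqrt{\tfrac{1-p_v}{np_v}}\,Z_1+\tfrac{1-p_v}{np_v}Z_1^2\bigr)$, and taking $\E_{Z_1}$ with $\E[Z_1]=0$, $\E[Z_1^2]=1$ yields $\tfrac{1}{np_v}\bigl(1+\tfrac{1-p_v}{np_v}\bigr)$, which is precisely \eqref{eq:approx-E_pis_logpis_sq}. Multiplying by the probability $p_v$ that $Pa_{X_i}^G=v$ then recovers the claim invoked in the variance proof, namely $\E_{pop}\bigl[(\ind{i}{v}L_i^v)^2\bigr]\approx \tfrac{1}{n}$ up to $O(n^{-2})$.

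The main obstacle I anticipate is the error bookkeeping rather than any single computation: I need to verify that every term discarded along the way --- the higher-order remainders of the $\log(1+x)$ and $1/(1+x)$ expansions, the $Z_2^3$ and $Z_2^4$ pieces, and the $Z_1Z_2$ cross terms that appear once \eqref{eq:approx-nis} is substituted inside \eqref{eq:approx-hat-pis} --- either has zero expectation or is of order $O(n^{-2})$, so that truncating both Taylor series at the quadratic term is legitimate. This is the same estimate that controls the $O((C(G))^2 n^{-2})$ error term in \eqref{eq:mean-variance-approximate}, applied one parameter at a time. Finally, the pool-hypothesis analogue of \eqref{eq:approx-E_pis_logpis_sq} follows by repeating the computation with the roles of $\theta$ and $\hat{\theta}$ (equivalently of $p_i^v$ and $\hat{p}_i^v$) interchanged.
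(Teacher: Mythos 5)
Your proposal is correct and follows essentially the same route as the paper: substitute the Gaussian surrogates for $\hat{p}_i^v$ and $n_i^v$, Taylor-expand the logarithms, square, and take iterated expectations over $Z_2$ then $Z_1$, with the $Z_2^3$ terms vanishing by symmetry and the $Z_2^4$ terms absorbed into the $O(n^{-2})$ error. The only cosmetic difference is that you combine the two weighted squares before taking expectations (so the $p_i^v$-dependence cancels early), whereas the paper evaluates $\E[p_i^v(\log\tfrac{p_i^v}{\hat p_i^v})^2]$ and $\E[(1-p_i^v)(\log\tfrac{1-p_i^v}{1-\hat p_i^v})^2]$ separately and adds at the end.
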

\begin{proof}	
	Using approximation \eqref{eq:approx_pis_log}
	\begin{flalign*}
	\E_{\hat{p}_i^v}\left[p_i^v \left( \log \frac{p_i^v}{\hat{p}_i^v}\right)^2 \right] \approx& \E_{Z_1, Z_2}\left[\frac{1}{p_i^v}\left(- \sqrt{\frac{ p_i^v(1-p_i^v)} {n_i^v}} Z_2 + \frac{1-p_i^v} {2n_i^v} Z_2^2\right)^2\right]\\
	=& \frac{1}{p_i^v} \E_{Z_1, Z_2}\left[\frac{p_i^v(1-p_i^v)} {n_i^v}Z_2^2 + \left(\frac{1-p_i^v} {2n_i^v}\right)^2 Z_2^4 -2\sqrt{\frac{p_i^v(1-p_i^v)} {n_i^v}} \frac{1-p_i^v}{2n_i^v} Z_2^3  \right]\\
	=& \frac{1}{p_i^v} \E_{Z_1}\left[\frac{p_i^v(1-p_i^v)} {n_i^v} + 3\left(\frac{1-p_i^v} {2n_i^v}\right)^2  \right]&\\
	\approx& (1-p_i^v)\E_{Z_1}\left[\frac{1}{n_i^v}\right]&\\
	\approx& (1-p_i^v)\E_{Z_1}\left[\frac{1} {np_v} \left( 1 - \sqrt{\frac{1 - p_v}{np_v}} Z_1 + \frac{1 - p_v}{np_v} Z_1^2 \right)\right]&\\
	=& \frac{1-p_i^v} {np_v}  \left( 1 + \frac{1 - p_v}{np_v} \right) \numberthis \label{eq:approx_square_log}
	\end{flalign*}
	Similar to \eqref{eq:approx_square_log}, we have:
	\begin{equation}
	\E_{\hat{p}_i^v}\left[(1 - p_i^v) \left( \log \frac{1 - p_i^v}{1 - \hat{p}_i^v} \right)^2\right] \approx \frac{p_i^v} {np_v}  \left( 1 + \frac{1 - p_v}{np_v} \right)
	\end{equation}
	The desired result follows.
\end{proof}

\subsection{Approximation of $\E_{pop}[L_i^2]$}\label{app:Ti2-TiTj}
We approximate $\E_{pop}[L_i^2]$ as:
\begin{align*}
\E_{pop}[L_i^2] =& \E_{pop}\left[ \left( \sum_{v} \ind{i}{v} L_i^v \right)^2 \right]\\
=& \E_{\hat{p}_i^v} \left[ \E \left[ \left( \sum_{v} \ind{i}{v} L_i^v \right)^2 \middle| \hat{p}_i^v \right] \right]\\
=& \sum_{v} p_v \E_{\hat{p}_i^v}[(L_i^v)^2]\\
=& \sum_{v} p_v \E_{\hat{p}_i^v}\left[p_i^v \left( \log \frac{p_i^v}{\hat{p}_i^v}\right)^2 + (1 - p_i^v) \left( \log \frac{1 - p_i^v}{1 - \hat{p}_i^v} \right)^2 \right]\\
\approx& \sum_{v} \frac{1} {n} \left(1 + \frac{1 - p_v}{np_v}\right) \text{(from approximation \eqref{eq:approx-E_pis_logpis_sq})}\\
=& \frac{1}{n}|V(Pa_{X_i}^G)| + \frac{1}{n^2}\sum_{v}\frac{1 - p_v}{p_v} \numberthis \label{eq:approxTSquare}
\end{align*}	

Combining the definition of complexity with equation \eqref{eq:approxTSquare}, we have:

\begin{equation}
\sum_{i=1}^{m} \E_{pop}[L_i^2] \approx \frac{C(G)}{n} + \frac{1}{n^2}\sum_{i=1}^{m}\sum_{v}\frac{1 - p_v}{p_v}
\end{equation}

\subsection{Approximation of $\E_{pop}[L_i L_j]$}

There are three possible cases while finding the value of $\E[L_i L_j]$. The random variables $X_i$ and $X_j$ might not have any common parents, might have some common parents or one is the parent of other. We start with the case in which $X_i$ and $X_j$ have no common parents. Let $p(v_i, v_j)$ represent the joint probability of $Pa_{X_i}^G = v_i$ and $Pa_{X_j}^G = v_j$.  \\
\begin{flalign*}
\E_{pop}[L_i L_j] =& \E_{pop}\left[\left(\sum_{v} \ind{i}{v_i} L_i^v\right) \left(\sum_{v_j} \ind{j}{v_j} L_j^v\right)\right]&\\
&= \E_{pop}\left[\sum_{v_i, v_j} \ind{i}{v_i} \ind{j}{v_j} L_i^v L_j^v \right] &\\
&= \sum_{v_i, v_j} \E_{pop}\left[\ind{i}{v_i} \ind{j}{v_j} L_i^v L_j^v \right] &\\
&= \sum_{v_i, v_j} p(v_i, v_j) \E_{pop}\left[L_i^v L_j^v \right]&\\
&\approx \sum_{v_i, v_j} p(v_i, v_j) \times \frac{1}{2np_{v_i}} \times \frac{1}{2np_{v_j}} \text{(from \eqref{eq:approx-E_pis_logpis})} &\\
&= \sum_{v_i, v_j} \frac{1}{4n^2} \times \frac{p(v_i, v_j)}{p_{v_i}p_{v_j}}  \numberthis \label{eq:NoParents}
\end{flalign*}

In the case where $X_i$ and $X_j$ have common parents $S_{ij}$, let $S_i$ represent the parents exclusive to $X_i$ and $S_j$ represent parents exclusive to $X_j$. Let $p(v_i, v_j, v_{ij})$ represent the joint probability of $Pa_{X_i}^G = v_i$ and $Pa_{X_j}^G = v_j$ and common parent of $X_i$ and $X_j$, $Pa_{X_{i,j}}^G = v_{ij}$. 

\begin{align*}
\E_{pop}[L_i L_j] =& \E_{pop}\left[\left(\sum_{v_i, v_{ij}} \ind{i}{v_i} \ind{ij}{v_{ij}} L_i^v\right) \left(\sum_{v_j,v_{ij}} \ind{j}{v_j} \ind{ij}{v_{ij}} L_j^v\right)\right]\\
=& \E_{pop}\left[\sum_{v_i, v_j,v_{ij}} \left(\ind{i}{v_i} \ind{j}{v_j} \ind{ij}{v_{ij}} L_i^v L_j^v \right) \right] \\
=& \sum_{v_i, v_j,v_{ij}} p(v_i, v_j, v_{ij}) \E_{pop}\left[L_i^v L_j^v \right] \\
\approx& \sum_{v_i, v_j,v_{ij}} p(v_i, v_j, v_{ij}) \times \frac{1}{2np(v_i, v_{ij})} \times \frac{1}{2np(v_j, v_{ij})} \qquad\text{(from \eqref{eq:approx-E_pis_logpis})} \\
=& \sum_{v_i, v_j,v_{ij}} \frac{1}{4n^2} \times \frac{p(v_i, v_j, v_{ij})}{p(v_i, v_{ij})p(v_j,v_{ij})}  \numberthis \label{eq:parentShare}
\end{align*}

In the case where $X_j$ is a parent of $X_i$,
\begin{align*}
\E_{pop}[L_i L_j] =& \E_{pop}\left[\left(\sum_{v_i} \ind{i}{v_i} x_j L_i^v\right) \left(\sum_{v_j} \ind{j}{v_j} L_j^v\right)\right]\\
=& \E_{pop}\left[\sum_{v_i, v_j} \left(\ind{i}{v_i} \ind{j}{v_j} x_j L_i^v \left(x_j \log\frac{p_j^v}{\hat{p}_j^v} \right) \right) \right] \\
=& \sum_{v_i, v_j} p(v_i, v_j, x_{j}) \E_{pop}\left[L_i^v \log\frac{p_j^v}{\hat{p}_j^v} \right] \\
\approx& \sum_{v_i, v_j} p(v_i, v_j, x_{j}) \times \frac{1}{2n p(v_i, x_j)} \times \frac{1-p_j^v}{2np_j^v} \text{(from \eqref{eq:approx-E_pis_logpis})}  \\
=& \sum_{v_i, v_j} \frac{1-p_j^v}{4n^2} \times \frac{p(v_i, v_j, x_{j})} {p(v_i, x_j) p_j^v}  \numberthis \label{eq:Parent}
\end{align*}

\section{Generic Categorical Variable}\label{app:categorical}

In this section, we generalize our results to any categorical variables (not just binary). The extension from binary to categorical is straightforward. We will have a similar expression for the likelihood ratio statistic:
 
 \begin{align*}
L(x) =& \log\left(\frac{\Pr(x; \pgmpop)}{\Pr(x ; \pgmpool)}\right)\\
=& \sum_{i=1}^{m} L_i,
\end{align*}
where $L_i$ is the contribution of $X_i$ to $L(x)$:
\begin{align*}
L_i  =& \sum_{v} \ind{i}{v} L_i^v
\end{align*}

Instead of writing $L_i^v$ as
\begin{equation*}
L_i^v  = x_i \log \frac{p_i^v}{\hat{p}_i^v} + (1 - x_i) \log \frac{1 - p_i^v}{1 - \hat{p}_i^v}
\end{equation*}
we write
\begin{equation*}
L_i^v  = \sum_{o \in V(X_i)} 1_{\{x_i=o\}}\log \frac{p_{io}^v}{\hat{p}_{io}^v},
\end{equation*}
$$p_{io}^v = \Pr(x_i = o | Pa_{X_i}^G = v)$$

Now,

\begin{align*}
\E_{pop}[L_i^v] =& \sum_{o \in V(X_i)} E\left[p_{io}^v \log\frac{p_{io}^v}{\hat{p}_{io}^v}\right] \\
=& \sum_{o \in V(X_i)} \frac{1-p_{io}^v}{2n_i^v} \quad \text{(from \eqref{eq:approx_pis_log})}\\
=& \frac{|V(X_i)|-1}{2n_i^v} \numberthis \label{eq:generalOneTerm}
\end{align*}

\begin{align*}
\E_{pop}[L_i] =& \sum_{v} E[\ind{i}{v} L_i^v | Pa_{X_i}^G =v]\\
=& \sum_{v}E_{\hat{p}_i^v} \left[E_x\left[\ind{i}{v} L_i^v \mid \hat{p}_i^v\right] \right] \\
=& \sum_{v}p_{v} \frac{|V(X_i)|-1}{2n_i^v} \quad \text{(from \eqref{eq:generalOneTerm})}\\
=& \sum_{v} \frac{|V(X_i)|-1}{2n} + O(n^{-2}) \quad \text{(from \eqref{eq:approx-1_over_nis})}\\
=& |V(Pa_{X_i})| \times \frac{|V(X_i)|-1}{2n} + O(n^{-2})
\end{align*}

Now we can calculate $\E_{pop}[L(x)]$ as:
\begin{align*}
\E_{pop}[L(x)] =& \sum_{i=1}^mE_{pop}[L_i]\\
\approx& \sum_{i=1}^m|V(Pa_{X_i})| \times \frac{|V(X_i)|-1}{2n} + O(n^{-2})\\
=& \frac{C(G)}{2n} +  O(C(G)n^{-2})
\end{align*}

Hence,
\begin{align}
\E_{pop}[L(x)] =& \frac{C(G)}{2n}  + O(C(G)n^{-2})
\end{align}

Similarly for deriving variance we have,
\begin{align*}
\E_{pop}[(L_i^v)^2] =& \sum_{o \in V(X_i)} \E\left[p_{io}^v \left(\log\frac{p_{io}^v}{\hat{p}_{io}^v}\right)^2\right] \\
=& \sum_{o \in V(X_i)} \frac{1-p_{io}^v}{n_i^v} \quad \text{(from \eqref{eq:approx_square_log})} \\
=& \frac{|V(X_i)|-1}{n_i^v} \numberthis \label{eq:generalSquareTerm}
\end{align*}

Using equation \eqref{eq:generalSquareTerm}, we can calculate $\E_{pop}[L_i^2]$ as:

\begin{align*}
\E_{pop}[L_i^2] =& \sum_{v} \E[\ind{i}{v} (L_i^v)^2 | Pa_{X_i}^G = v]\\
=& \sum_{v} \E_{\hat{p}_i^v} \left[E_x\left[\ind{i}{v} (L_i^v)^2 \mid \hat{p}_i^v\right] \right] \\
=& \sum_{v}p_{v} \frac{|V(X_i)|-1}{n_i^v} \quad \text{(from \eqref{eq:generalSquareTerm})}\\
=& \sum_{v} \frac{|V(X_i)|-1}{n} + O(n^{-2}) \quad \text{(from \eqref{eq:approx-1_over_nis})}\\
=& |V(Pa_{X_i})| \times \frac{|V(X_i)|-1}{n} + O(n^{-2})
\end{align*}
Hence,
\begin{align*}
\sum_{i=1}^m \E_{pop}[L_i^2] =& \sum_{i=1}^m|V(Pa_{X_i})| \times \frac{|V(X_i)|-1}{n} + O(n^{-2})\\
=& \frac{C(G)}{n} +  O(C(G)n^{-2})
\end{align*}
\begin{align*}
\Var_{pop}(L) =& \E_{pop}[L^2]- (\E_{pop}[L])^2\\
\E_{pop}[L^2] =& \sum_{i =1}^m \E[L_i^2] + 2 \sum_{ 1 \leq i < j \leq m} \E[L_iL_j]
\end{align*}
Similar to the derivations of $\sum \E_{pop}[L_i]$ and $\sum \E_{pop}[L_i^2]$, we will have 

\begin{align*}
\sum_{i,j} \E_{pop}[L_iL_j] = \frac{(C(G))^2}{4n^2} + O((C(G))^2n^{-2})
\end{align*}

Hence, for categorical variables:
\begin{align}
Var_{pop}[L(x)] =& \frac{C(G)}{n}  + O((C(G))^2n^{-2})
\end{align}
\section{Naive Bayes}\label{app:naive-bayes}
 In section \ref{subsec:derivation-mean-variance}, while deriving the variance, we haven't calculated the exact value of the $O((C(G))^2 n^{-2})$ term. From the released model, it is possible to calculate the exact value of this term. Here we derive the exact value of the $O((C(G))^2 n^{-2})$ term, when the released model is a Naive Bayes model. Let the number of attributes in the model be equal to $m$. Hence, the complexity of the model is $C(G) = 2m - 1$. Let $X_1$ be the class variable and $p_i^1 = Pr(X_i =1 |X_1 = 1)$. Then, using equation \eqref{eq:approx-E_pis_logpis} we have:
	\begin{align*}
	\E_{pop}(L)
	&= \E_{pop}\left[ x_1 \log \frac{p_1}{\hat{p}_1} + (1 - x_1) \log \frac{1 - p_1}{1 - \hat{p}_1} + x_1 \sum_{i=2}^{m} \left( x_i \log \frac{p_i^1}{\hat{p}_i^1} + (1 - x_i) \log \frac{1 - p_i^1}{1 - \hat{p}_i^1} \right) \right. \\
	&\left. \quad \quad \quad \quad \quad \quad \quad \quad \quad \quad \quad \quad \quad \quad \quad \quad \quad + (1-x_1)\sum_{i=2}^{m} \left(x_i\log\frac{p_i^0}{\hat{p}_i^0}+(1-x_i)\log\frac{1-p_i^0}{1-\hat{p}_i^0}\right) \right] \\
	&= \frac{1}{2n}+ \sum_{i=2}^{m} \left[ p_1 \times \frac{1}{2np_1}+ \frac{1}{2n^2} \left[ \frac{1-p_1}{p_1}  \right] \right] +  \sum_{i=2}^{m} \left[ \left( 1-p_1 \right) \times \frac{1}{2n(1-p_1)} + \frac{1}{2n^2} \left[ \frac{p_1}{1-p_1}  \right] \right] \\
	&= \frac{2m-1}{2n} + O(mn^{-2}) \\
	&= \frac{C(G)}{2n} + O(C(G)n^{-2})  \numberthis \label{eq:NaiveBayesMean}
	\end{align*}

We can calculate the exact value of $\E_{pop}(L^2)$ using the equations \eqref{eq:approx-E_pis_logpis_sq}, \eqref{eq:parentShare} and \eqref{eq:Parent} as below :
	\begin{align*}
	\E_{pop}(L^2)
	&= \E_{pop}\left[ \left[ x_1 \log \frac{p_1}{\hat{p}_1} + (1 - x_1) \log \frac{1 - p_1}{1 - \hat{p}_1} + x_1 \sum_{i=2}^{m} \left( x_i \log \frac{p_i^1}{\hat{p}_i^1} + (1 - x_i) \log \frac{1 - p_i^1}{1 - \hat{p}_i^1} \right) \right. \right.\\
	& \quad \quad \quad \quad \quad \quad \quad \quad \quad \quad \quad \quad \quad \quad \quad  + (1-x_1)\sum_{i=2}^{m} \left(x_i\log\frac{p_i^0}{\hat{p}_i^0} \left. \left. +(1-x_i)\log\frac{1-p_i^0}{1-\hat{p}_i^0}\right) \right]^2 \right] \\
	&\stackrel{}{=} \frac{1}{n}+ \sum_{i=2}^{m} \left[ p_1 \times\frac{1}{np_1} + \frac{1}{n^2} \left[ \frac{1-p_1}{p_1} \right] \right]  + \sum_{i=2}^{m} \left[ \left( 1-p_1 \right) \times \frac{1}{n(1-p_1)} + \frac{1}{n^2} \left[ \frac{p_1}{1-p_1}  \right] \right] \\
	& \quad  + 2 \left[ {m-1 \choose 2} \times \frac{p_1}{4n^2\hat{p}_1^2} + {m-1 \choose 2} \times \frac{1-p_1}{4n^2(1-\hat{p}_1)^2} + \frac{(m-1)(1-p_1)}{4n^2} + \frac{(m-1)(p_1)}{4n^2} \right] \\
	&= \frac{2m-1}{n} + \frac{(m-1)(m-2)}{4n^2} \left[ \frac{p_1}{\hat{p}_1^2} + \frac{1-p_1}{(1-\hat{p}_1)^2} \right] + O(mn^{-2}) \\
	&\approx \frac{C(G)}{n} + \frac{m^2}{4n^2} \left[\frac{1}{p_1(1-p_1)} \right] + O(mn^{-2})  \numberthis \label{eq:NaiveBayesSquare}
	\end{align*}
	
	Combining equations \eqref{eq:NaiveBayesMean} and \eqref{eq:NaiveBayesSquare}, we have the variance for Naive Bayes as:
	
	\begin{align*}
	\Var_{pop}(L)
	&= \E_{pop}(L^2) - (\E_{pop}(L))^2  \\
	&= \frac{C(G)}{n} + \frac{m^2}{4n^2} \left[\frac{1}{p_1(1-p_1)} -4 \right] + O(mn^{-2})  \\
	&\approx \frac{C(G)}{n} + O((C(G))^2n^{-2}) \numberthis \label{eq:NaiveBayesVariance}
	\end{align*}
	
\section{Understanding the complexity metric - Parameter estimation errors} \label{sec:fisher}





The complexity of a Bayesian network $\pgmpop$ with discrete random variables is the number of independent parameters used to define its probability distribution.
\begin{align*}
	C(G) = \sum_{i=1}^m |V(Pa_{X_i}^G)| (|V(X_i)| - 1)
\end{align*}
	The parameters $\theta$ are estimated from the pool data. To understand the privacy risk of this learning to members of the pool, we need to study the influence a member can have on the value of the parameters. Fisher information quantifies the amount of information a random variable carries about the parameter(s) $\theta$ of the probability distribution from which it is generated.
	\begin{equation*}
		I(\theta) = -\E_\theta(\nabla^2 l(\theta)),
	\end{equation*}
	where $I(\theta)$ is Fisher information, and $l(\theta)$ is the log-likelihood function for $\theta$.

If $\hat{\theta}$ is a Maximum Likelihood Estimate of $\theta$, then it is known that
 	\begin{equation*}
 		\hat{\theta} = Normal(\theta, I(\hat{\theta})^{-1}).
 	\end{equation*}

The log-likelihood functions of parameter $\theta$ from a PGM $\pgmpop$, given a sample $x$ are typically of the form:
	\begin{align*}
	l(\theta) =& \log\left[\Pr(x;\pgmpop)\right]\\
		=& \sum_{i=1}^{m} l_i
	\end{align*}
where $l_i$ is contribution of $X_i$ to the likelihood function:
	\begin{align*}
	l_i =& \sum_{ v \in V(Pa_{X_i}^G)} \ind{i}{v} l_i^v \\
	l_i^v	=& \sum_{o \in V(X_i)} 1_{\{x_i= o\}}\log p_{io}^v
	\end{align*}

	\begin{equation*}
		l = \sum_{i, v, o} f_{i, v, o} (x_1, x_2, \ldots, x_m) \log(p_{io}^v),
	\end{equation*}
	where $f_{i, v, o}$ are activator functions (some combination of $x_i$'s) for the parameter $p_{io}^v$.
	\begin{equation*}
		I(p) = -E_p(\nabla^2 l(p))
	\end{equation*}

	All the non-diagonal elements of the information matrix are zero, because:
	\begin{equation*}
	\frac{\partial}{\partial p_{io}^v}\frac{\partial}{\partial p_{jo}^v} [\sum_{i, v, o} f_{i, v, o} (x_1, x_2, \ldots ,x_m)\log(p_{io}^v)] = 0, \forall  i \neq j
	\end{equation*}
	This implies that all the standard normal variables used to represent frequencies in pool are pair-wise independent i.e. all the estimation errors are independent across parameters. The difference between an estimated parameter value (calculated from the pool) and the actual parameter value (calculated from the general population) is the estimation error that leaks information about the pool. Since these estimation errors are independent across parameters of a Bayesian network, each parameter makes a separate contribution to the power of the attacker. Hence, the complexity measure defined as the number of independent parameters, captures the potential privacy risk of the model.

\section{\textbf{What about models trained with differential privacy?}} \label{sec:discussion-dp}
The bound provided in Theorem 1 is computed assuming that the parameters are learned without any privacy defense. The parameters can also be learned with a privacy defense (like differential privacy) in place. The effect of a differentially private learning mechanism on our bound can be better reasoned under the recently introduced notion of ``f-differential privacy'' ($f$-DP) \cite{dong2019gaussian}. $f$-DP is a new relaxation of differential privacy based on a framework of hypothesis testing. It characterizes the trade-off between type I and type II errors in distinguishing any two neighboring datasets using a function $f$. When the function $f$ is from a specific family that characterizes the trade-off between type I and type II errors in distinguishing the two normal distributions $\mathcal{N}(0,1)$ and $\mathcal{N}(\mu,1)$ based on one draw, it is said to be $\mu$-GDP. If the learning mechanism satisfies $\mu$-GDP, then the bound on power of membership inference in Theorem 1 will become:  
\begin{align}\label{eq:dp-effect}
z_{\alpha}+z_{1-\beta} \leq \mu 
\end{align}
Corollary 2.13 in the paper \cite{dong2019gaussian} provides the relationship between $\mu$-GDP and the standard $(\epsilon,\delta)-DP$.

\textbf{Corollary 1 ~\cite{dong2019gaussian}:}
    A mechanism is $\mu$-GDP if and only if it is $\big(\epsilon,\delta(\epsilon)\big)$-DP for all $\epsilon \geq 0$, where
    \[
    \delta(\epsilon)= \Phi\Big( -\frac{\epsilon}{\mu} +\frac{\mu}{2} \Big)-
    \mathrm{e}^{\epsilon}\Phi\Big(- \frac{\epsilon}{\mu} - \frac{\mu}{2} \Big)
    \], and $\Phi$ is the CDF of standard normal distribution.

Using Corollary 1 and equation~\ref{eq:dp-effect}, we can calculate how our bound in Theorem 1 changes when the parameters are learned with differential privacy guarantees.

\section{Evaluation details: Bayesian network learning and Data synthesis}\label{sec:pgm-appendix}

In this section, we describe the methods used in the evaluation for learning structure and parameters of a Bayesian network and generating synthetic data. See~\cite{koller2009probabilistic} for a comprehensive overview of the methods to learn Bayesian networks.
 
\subsection{Structure Learning} \label{sec:pgm:structure}

The objective is to learn the significant dependencies between random variables, and represent them as a graph.  We used an existing algorithm based on maximizing a score function that measures how correlated different attributes are, according to the training data~\cite{hall1999correlation}.  For each attribute we find a set of attributes which are highly correlated with it, yet are not significantly correlated among themselves.  
\begin{align}
	score(Pa_{X_i}^G) = \frac{\sum_{X_j \in Pa_{X_i}^G} corr(X_i, X_j)}{\sqrt{|Pa_{X_i}^G|+ \sum_{x_j,x_k \in Pa_{X_i}^G}corr(X_j, X_k)}},
\end{align}
\begin{align*}
	corr(X_i,X_j) = 2- 2\frac{H(X_i,X_j)}{H(X_i)+H(X_j)},
\end{align*}
where $H$ is the entropy function.

While optimizing this score for each attribute, we need to make sure that the graph remains acyclic.  Also, to control the complexity of the graph, we impose a condition on $\eta$, the maximum number of parents for each node.  We use an iterative and greedy algorithm that adds parents to each node while maximizing the score for all nodes at each iteration, subject to the constraints. 

\subsection{Parameter Learning} \label{sec:pgm:parameters}

We assume a prior distribution on all possible values of the parameters $\theta$, and use the training data set to update this distribution, using a Bayesian approach. 

Let $X_i$ be the random variable for a categorical attribute.  Let $\vec{\theta}_i$ be the parameters of the conditional probability $\Pr[X_i | \parpop]$.  For each assignment of values to $\parpop$, we assume a prior distribution on all the possible $k$-dimensional multinomial distributions.  The prior distribution for each assignment $v$ comes from a Dirichlet family, i.e., $\vec{\theta}_i^v \sim Dirichlet(\vec{\alpha}_i^v)$, where $\vec{\alpha}_i^v$ is the hyper parameters of the distribution. 

Let $\vec{c}_i^v = [c_{i1}^v, c_{i2}^v, \cdots, c_{ik}^v]$ include the frequency of the events $[X_i = j | \parpop = v]$ in training data.  We compute the posterior distribution for $\vec{\theta}_i^v$ as $Dirichlet(\vec{\alpha}_i^v + \vec{c}_i^v)$.  Thus, the most likely estimation for set of parameters $\vec{\theta}_i^v$ is:
\begin{align}\label{eq:parestimation}
	\theta_{ij}^v = \frac{\alpha_{ij}^v + c_{ij}^v}{\sum_{j=1}^k (\alpha_{ij}^v + c_{ij}^v)}.
\end{align}
In all our experiments, we use a uniform prior i.e., we set $\vec{\alpha}_i^v$ to $1$ in all dimensions.

\subsection{Data Synthesis} \label{sec:pgm:synthesis}

Given a data set $D$, we want to synthesize datasets that are close in distribution to $D$. Graphical models could be used for inference and prediction, as well as generating synthetic data (from the underlying distribution that they encode).  We use the below process for generating synthetic datasets:

\begin{enumerate}
  \item Learn a Bayesian network $\pgmpop$ from the data set $D$.
  \item Create a Bayesian network $\pgmpopp$ with $G' = G$, and $\theta'$ drawn from the posterior Dirichlet distribution for $\theta$, which was computed during parameter learning. 
  \item Draw independent samples from $\pgmpopp$.
\end{enumerate}

In our experiments, while generating the synthetic data, we use $\eta =3$ for learning the structure $G$ of the Bayesian network $\pgmpop$ from the data set $D$.
\section{Additional evaluation}
\subsection{Effect of releasing statistically insignificant edges}

We analyze the effect on the power of attack of releasing edges (conditional probabilities) that are statistically insignificant. To perform this evaluation, we consider the case where the structure of released model is not learned from data but generated in a random way. 

Figure~\ref{table: RandomEdges} compares the power of the attack when two different models of almost equal complexity are released. The structure of first model is generated by randomly adding edges and the structure of second model is learned from data. Adversary uses the released model as the population model to calculate the LR statistic and perform tracing attack. We can observe that the attack power is similar in both the cases.

The edges that are generated randomly might not be statistically significant, but they leak about membership. In the LR Test for tracing attack, we rely on the difference between probability distributions for pool and population. Adding a statistically insignificant edge gives similar probability distributions for all configurations of the parent. Although the conditional probabilities are similar, their values will be different for pool and population and hence they will leak about membership.  \textbf{Statistically insignificant edges leak as much information about membership as significant edges.}

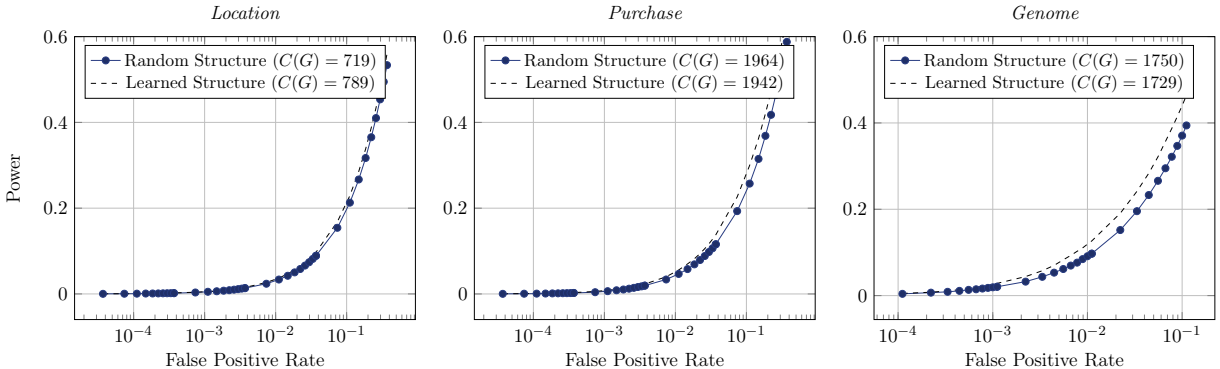
\begin{figure*}[htb]
	\centering
	\resizebox{1\columnwidth}{!}{%
		\begin{tabular}{llll}
			
			\begin{tikzpicture}
			\begin{semilogxaxis}
			[legend style={font=\small},title=  {\em Location} ,xlabel={False Positive Rate},ylabel={Power}, ymax=0.6, grid = major, legend entries = {
				Random Structure $(C(G) = 719) $,
				Learned Structure $(C(G)= 789)$
			},
			legend pos =  north west]
			\addplot table[x index=0,y index=1] {"Data/Bangkok/Random/Mean_N_Parents1.dat"};
			\addplot[dashed] table[x index=0,y index=1] {"Data/Bangkok/Mean_N_Parents1.dat"};

			\end{semilogxaxis}
			\end{tikzpicture} &
			
			\begin{tikzpicture}
			\begin{semilogxaxis}
			[legend style={font=\small},title=  {\em Purchase} ,xlabel={False Positive Rate},ylabel={}, ymax=0.6, grid = major, legend entries = {
				Random Structure $(C(G) = 1964)$,
				Learned Structure $(C(G)= 1942)$
			},
			legend pos =  north west]

			\addplot table[x index=0,y index=1] {"Data/Purchase/Random/Mean_N_Parents2.dat"};
			\addplot[dashed] table[x index=0,y index=1] {"Data/Purchase/Mean_N_Parents2.dat"};

			\end{semilogxaxis}
			\end{tikzpicture}   &
			
			\begin{tikzpicture}
			\begin{semilogxaxis}
			[legend style={font=\small},title=  {\em Genome} ,xlabel={False Positive Rate},ylabel={}, ymax=0.6, grid = major, legend entries = {
				Random Structure $(C(G)= 1750 )$,
				Learned Structure $(C(G)= 1729)$
			},
			legend pos =  north west]
			\addplot table[x index=0,y index=1] {"Data/Genome_1000/Random/Mean_N_Parents1.dat"};
			\addplot[dashed] table[x index=0,y index=1] {"Data/Genome_1000/Mean_N_Parents1.dat"};
	
			\end{semilogxaxis}
			\end{tikzpicture}   \\
			
		\end{tabular}
	}
	\caption{\small{\textbf{Effect of Releasing Graphical Models with Random Edges:} Here we compare the power of attack, when two models of similar complexity learned on the dataset are released but structure of one model is learned from data and the structure of other is generated randomly. We can see that for close values of $C$, the power of attack is almost same for both the models. This shows that statistically insignificant edges leak as much information about membership, as that of significant edges. }}
	\label{table: RandomEdges}
\end{figure*}
\subsection{Optimality of the theoretical threshold}

In Figure~\ref{table: thresholdPlot}, we compare theoretical thresholds for certain false positive rates with their corresponding values estimated using the reference population. The adversary has access to some reference population. For a given false positive rate, the adversary chooses the threshold based on the likelihood ratio on the reference population data. The attacker then runs the LR test tracing attack.  When $\eta = 0$ (row 1), we observe that the theoretical threshold values are much higher than the estimated values. When $\eta = 3$ (row 2), the observed thresholds are closer to the estimated values.

When $\eta = 0$, the parameter estimation errors are correlated, which reduces the amount of information leakage. The adversary, when using the theoretical threshold, overestimates the amount of leakage (power) and hence chooses a higher threshold. When $\eta = 3$, the released model captures most of the dependencies among attributes in the data. Hence the observed threshold will be closer to the theoretical threshold values.  \textbf{From the adversary's perspective, the theoretical threshold value is sub-optimal when the released model is underfitted (loss of utility).}

\begin{figure*}[htb]
	\centering
	\resizebox{1\columnwidth}{!}{%
		\begin{tabular}{llll}
			
			\begin{tikzpicture}
			\begin{semilogxaxis}
			[legend style={font=\small},title= {\em Location} ,xlabel={},ylabel={Threshold}, grid = major, legend entries = {Attack , Theoretical}, ymin= -6, ymax=0, legend pos =  south east]
			\addplot table[x index=0,y index=2] {"Data/Bangkok/Adv_Mean_N_Parents0.dat"};
			\addplot table[x index=0,y index=2] {"Data/Bangkok/exact_theory_threshold0.dat"};
			\end{semilogxaxis}
			\end{tikzpicture} &
			
			\begin{tikzpicture}
			\begin{semilogxaxis}
			[legend style={font=\small},title= {\em Purchase} ,xlabel={},ylabel={}, grid = major, legend entries = {Attack , Theoretical},ymin= -6, ymax=0, legend pos =  south east]
			\addplot table[x index=0,y index=2] {"Data/Purchase/Adv_Mean_N_Parents0.dat"};
			\addplot table[x index=0,y index=2] {"Data/Purchase/exact_theory_threshold0.dat"};
			\end{semilogxaxis}
			\end{tikzpicture} &
			
			\begin{tikzpicture}
			\begin{semilogxaxis}
			[legend style={font=\small},title= {\em Genome} ,xlabel={},ylabel={},ylabel right ={$\eta =0$}, grid = major, legend entries = {Attack, Theoretical }, ymin= -6,ymax=0, legend pos =  north west]
			\addplot table[x index=0,y index=2] {"Data/Genome_1000/Adv_Mean_N_Parents0.dat"};
			\addplot table[x index=0,y index=2] {"Data/Genome_1000/exact_theory_threshold0.dat"};
			\end{semilogxaxis}
			\end{tikzpicture} &

			\\
			
			\begin{tikzpicture}
			\begin{semilogxaxis}
			[legend style={font=\small},title= {} ,xlabel={False Positive Rate},ylabel={Threshold}, grid = major, legend entries = {Attack, Theoretical  }, ymin= -6, ymax=0, legend pos =  north west]
			\addplot table[x index=0,y index=2] {"Data/Bangkok/Adv_Mean_N_Parents3.dat"};
			\addplot table[x index=0,y index=2] {"Data/Bangkok/exact_theory_threshold3.dat"};
			\end{semilogxaxis}
			\end{tikzpicture} &
			
			\begin{tikzpicture}
			\begin{semilogxaxis}
			[legend style={font=\small},title= {} ,xlabel={False Positive Rate},ylabel={}, grid = major, legend entries = {Attack , Theoretical  }, ymin= -6,ymax=0, legend pos =  north west]
			\addplot table[x index=0,y index=2] {"Data/Purchase/Adv_Mean_N_Parents3.dat"};
			\addplot table[x index=0,y index=2] {"Data/Purchase/exact_theory_threshold3.dat"};
			\end{semilogxaxis}
			\end{tikzpicture} &
			
			\begin{tikzpicture}
			\begin{semilogxaxis}
			[legend style={font=\small},title= {} ,xlabel={False Positive Rate},ylabel={},ylabel right ={$\eta =3$}, grid = major, legend entries = {Attack, Theoretical  }, ymin = -6,ymax=0, legend pos =  north west]
			\addplot table[x index=0,y index=2] {"Data/Genome_1000/Adv_Mean_N_Parents3.dat"};
			\addplot table[x index=0,y index=2] {"Data/Genome_1000/exact_theory_threshold3.dat"};
			\end{semilogxaxis}
			\end{tikzpicture} &

			\\
		\end{tabular}
	}
	\caption{\small{\textbf{Effect of releasing underfitted models on threshold selection:} This plot compares the threshold values estimated by the adversary using reference population at different false positive rates with their corresponding theoretical values. The label Attack indicates that the threshold is estimated by the adversary using reference population. We observe that for underfit models ($\eta = 0$) (first row), the threshold value estimated from the reference population is way less than the corresponding theoretical value. As the model gets closer to the generator distribution ($\eta = 3$) (second row), the estimated threshold values get closer to the theoretical values. 
	}}
	\label{table: thresholdPlot}
\end{figure*}
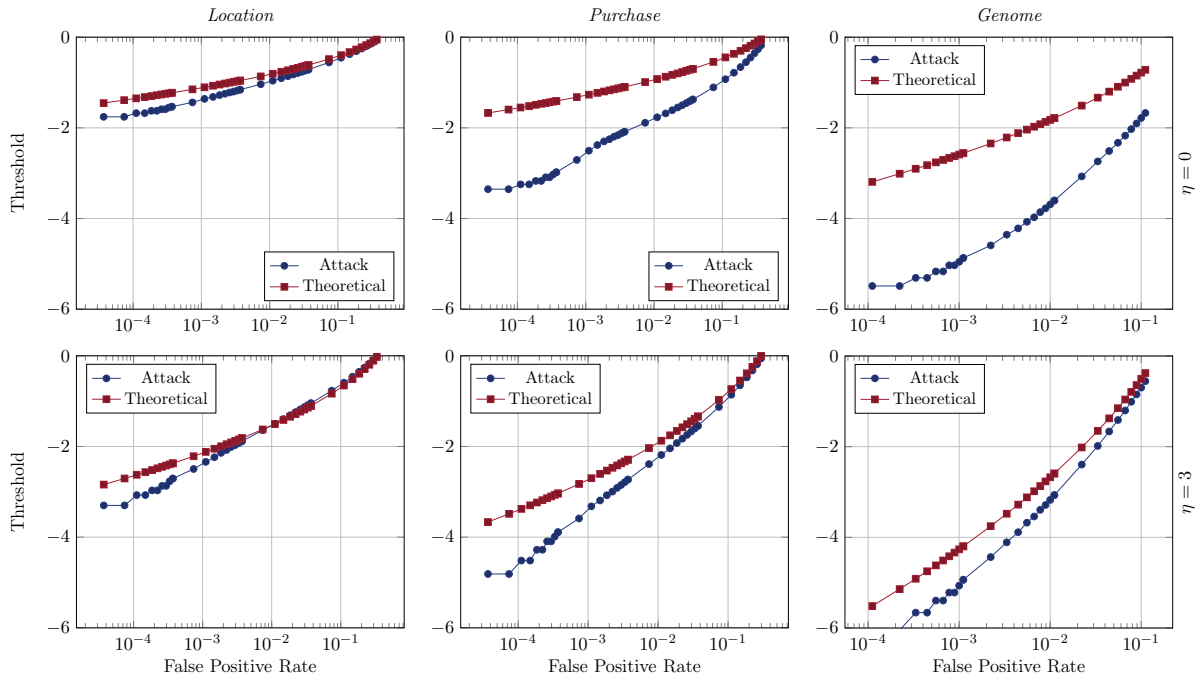
\subsection{Effect of using a complex model for estimating likelihood of Null hypothesis}

In this subsection, we present the effect of population model choice on the behavior of the likelihood ratio test and on the power of the tracing attack. Specifically, we study the effect of using models that are more complex than the released model as population model. 
The parameters of a graphical model $\pgmpool$ with $\eta = 1$ are learned on the pool data and released. The adversary has access to a complex and better representative model $\pgmdifpop$ that was learned with $\eta = 3$. The adversary can choose to use either the released model structure $G$ or a complex model structure $G_{pop}$ as population model structure.

Figure~\ref{fig:lrdistriPoolPop} compares the empirical distribution of test statistic (likelihood ratio) values computed on members of the pool and on non-members for both choices of population model on the genome dataset. On the right, we observe that the member distribution is indistinguishable from the non-member distribution when $G_{pop}$ (learned with $\eta =3$) is used as structure of population model. 
We also observe that the values of the likelihood ratio are much higher -- from 20 to 70 -- compared to the values we observe on the left (narrowly concentrated around 0) when the structure of population model is same as that of released model (learned with $\eta=1$).

When a complex model is used as population model, the likelihood value of the null hypothesis increases for both members and non-members. Hence it cannot help in distinguishing members from non-members. \textit{Also it changes the meaning of the hypothesis test. When a complex model is used to compute the likelihood of null hypothesis, the computed likelihood is no longer the likelihood of the target being a random sample from the population. The meaning of this new hypothesis test would be the following: which of the \textbf{models is more likely} given the target. Since complex models are more likely compared to simpler models, the test statistic (likelihood ratio) values will be very high and positive.} Figure~\ref{fig:choosingG} compares the power of the tracing attack for both choices of population model. The power of the attack is higher when the released model structure is used as the population model structure. \textbf{Knowledge of additional statistics about population other than the released statistics doesn't increase the power of adversary.}

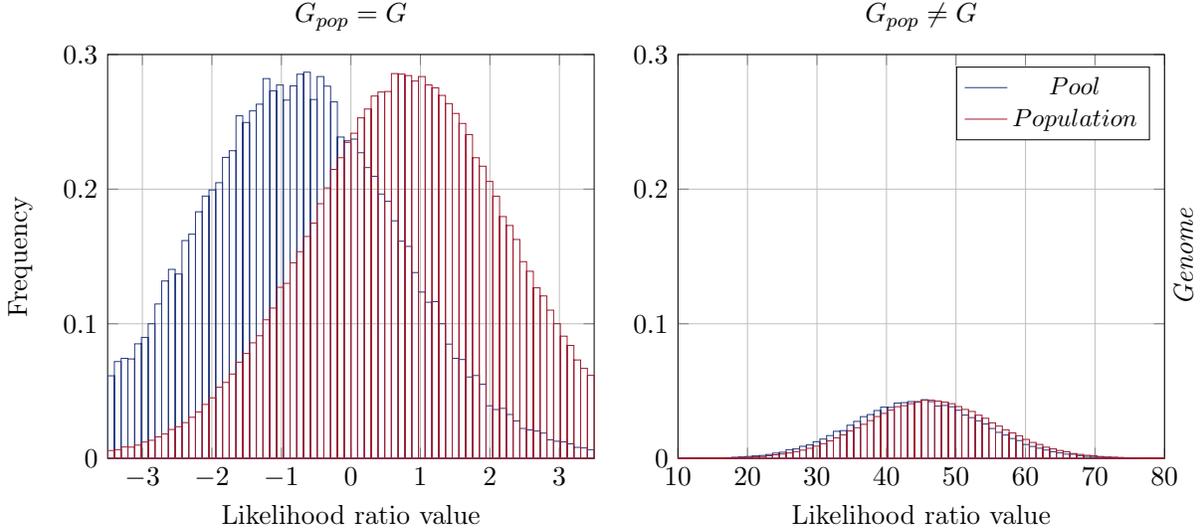
\begin{figure*}[htb]
	\centering
	\resizebox{1\columnwidth}{!}{%
		\begin{tabular}{lll}
			
			\begin{tikzpicture}
			\begin{axis}
			[xmin=-3.5, xmax=3.5, ymin=0, ymax=0.3,legend style={font=\small},title=  {\em $G_{pop} = G$} ,xlabel={Likelihood ratio value},ylabel={Frequency}, grid = major]
			\addplot+[ybar interval,mark=no] table [x index=0, y index =1] {"Data/LRvalues_Hist/hist_pool_1_1.dat"};
			\addplot+[ybar interval,mark=no] table [x index=0, y index =1] {"Data/LRvalues_Hist/hist_pop_1_1.dat"};
			
			\end{axis}
			\end{tikzpicture} &

			\begin{tikzpicture}
			\begin{axis}
			[xmin=10, xmax=80, ymin=0, ymax=0.3,legend style={font=\small},title=  {\em $G_{pop} \neq G$} ,xlabel={Likelihood ratio value}, ylabel right ={\em Genome}, grid = major, legend entries = {$Pool $, $Population$ },  legend pos =  north east]
			\addplot+[ybar interval,mark=no] table [x index=0, y index =1] {"Data/LRvalues_Hist/hist_pool_1_3.dat"};
			\addplot+[ybar interval,mark=no] table [x index=0, y index =1] {"Data/LRvalues_Hist/hist_pop_1_3.dat"};

			\end{axis}
			\end{tikzpicture} \\
			
		\end{tabular}
	}
	\caption{\small{\textbf{Comparison of likelihood ratio distributions computed on members of the pool (blue histogram) and on non-members (red histogram):} 
	 \textbf{Left:} To calculate the likelihood of the null hypothesis $\hout$, we use the population model $\pgmpop$, whose structure is the same as that of the released model $\pgmpool$ ($\eta = 1$). We observe that the member distribution is clearly distinguishable from the non-member distribution.
	\textbf{Right:} To calculate the likelihood of the null hypothesis $\hout$, we use the population model $\pgmdifpop$, whose structure is different and more complex ($\eta = 3$) than the released model $\pgmpool$ ($\eta = 1$). We observe that the member/non-member distributions are indistinguishable. Also, the values of the likelihood ratio are much higher compared to the left part of the figure.
	 Using a complex population model might increase the likelihood of null hypothesis $\hout$, but it increases the value for both members and non-members (as a complex model can explain both members and non-members better than a simple model can), making them indistinguishable.
	Hence the optimal choice of population model for the adversary is the released model estimated over the reference population.
	}}
	\label{fig:lrdistriPoolPop}
\end{figure*}

\begin{figure*}[htb]
	\centering
	\caption{\small{\textbf{Effect of using a complex model as population model:} The parameters of a graphical model $\pgmpool$ with $\eta =1$ are learned on the pool data and the model is released. The adversary has access to a better (more complex) generative model $\pgmdifpop$ with ($\eta =3$). We observe how the power of the attack changes when calculating the likelihood of null hypothesis using this complex generative model structure instead of the released model structure. We can see that the power of the attack reduces when the population model structure is not the same as the released model structure. As shown in Figure~\ref{fig:lrdistriPoolPop}, using a complex population model increases the likelihood for both members and non-members and hence cannot help in distinguishing them. This shows that it is not possible to increase the power of adversary using knowledge about additional statistics on the data that are not present in the released graphical model. }}
	\resizebox{1\columnwidth}{!}{%
		\begin{tabular}{llll}
			
			\begin{tikzpicture}
			\begin{semilogxaxis}
			[legend style={font=\small},title=  {\em Location} ,xlabel={False Positive Rate},ylabel={Power}, grid = major, legend entries = {$ C(G) = 789 \quad G_{pop} = G $ ,$C(G_{pop})= 1905 \quad G_{pop} \neq G $}, ymax=0.6, legend pos =  north west]
			\addplot table[x index=0,y index=1] {"Data/Bangkok/Mean_N_Parents1.dat"};
			\addplot table[x index=0,y index=1] {"Data/Bangkok/Mean_N_Parents13.dat"};
			
			\end{semilogxaxis}
			\end{tikzpicture} &
			
			\begin{tikzpicture}
			\begin{semilogxaxis}
			[legend style={font=\small},title=  {\em Purchase} ,xlabel={False Positive Rate},ylabel={}, grid = major, legend entries = {$C(G) = 1096 \quad G_{pop} = G$, $C(G_{pop})=3431 \quad G_{pop} \neq G $ }, ymax=0.6, legend pos =  north west]
			\addplot table[x index=0,y index=1] {"Data/Purchase/Mean_N_Parents1.dat"};
			\addplot table[x index=0,y index=1] {"Data/Purchase/Mean_N_Parents13.dat"};

			\end{semilogxaxis}
			\end{tikzpicture}    &
			
			\begin{tikzpicture}
			\begin{semilogxaxis}
			[legend style={font=\small},title=  {\em Genome} ,xlabel={False Positive Rate},ylabel={}, grid = major, legend entries = {$C(G)= 1729 \quad G_{pop} = G$, $C(G_{pop})=4323 \quad G_{pop} \neq G $ }, ymax=0.6, legend pos =  north west]
			\addplot table[x index=0,y index=1] {"Data/Genome_1000/Mean_N_Parents1.dat"};
			\addplot table[x index=0,y index=1] {"Data/Genome_1000/Mean_N_Parents13.dat"};
			\end{semilogxaxis}
			\end{tikzpicture} \\
			
		\end{tabular}
	}
	
	\label{fig:choosingG}
\end{figure*}

\subsection{Effect of Biased Sampling}

In this section, we empirically study the effect of sampling bias on the power of tracing attack. We model a case of sampling bias, where we discriminate against individuals with some attribute value (say 1). We add a bias in the sampling mechanism for pool, by making the probability of selecting an attribute with value 1 as $1-bias$.
    \begin{align}
    Pr(select|X_i = 0) = 1
    \end{align}
    \begin{align}
    Pr(select|X_i = 1) = 1-bias
    \end{align}
Synthetic data for this experiment was generated from graphs learned on Genome data. Pool is sampled in a biased way as described above. The parameter $bias$ can be used alter the amount of sampling bias. We generate a total of 10000 samples, of which we randomly select 2000 as pool and 4000 as reference population. 
   
Figure \ref{table: NewSamplingBiasPower} shows the effect of bias on power of attack. We can clearly observe that power of attack increases with increase in bias. When the pool is drawn from same distribution as population, we leveraged on finite sample estimation error for membership inference. If pool is drawn from distribution that is even slightly different from population distribution, power of attack increases and can be greater than provided bounds. \textbf{Biased sampling increases the power of tracing attack}

    \begin{figure*}[htb]
    \centering
    \resizebox{1\columnwidth}{!}{%
    \begin{tabular}{lll}

    \begin{tikzpicture}
        \begin{semilogxaxis}
        [legend style={font=\tiny},title=  {Biased Sampling$(\eta = 0)$} ,xlabel={False Positive Rate},ylabel={Power}, grid = major, legend entries = {$bias = 0.3$,$bias = 0.1$, $bias= 0(Theory)$ }, legend pos =  south east]
        \addplot table[x index=0,y index=1] {"Data/Final_Bias/01/Mean_N_Parents0.dat"};
        \addplot table[x index=0,y index=1] {"Data/Final_Bias/00/Mean_N_Parents0.dat"};

        \addplot[dashed, red] table[x index=0,y index=1] {"Data/Final_Bias/01/theory_threshold.dat"};
        \end{semilogxaxis}
    \end{tikzpicture} &

    \begin{tikzpicture}
        \begin{semilogxaxis}
        [legend style={font=\tiny},title=  {Biased Sampling$(\eta = 3)$} ,xlabel={False Positive Rate},ylabel={Power}, grid = major, legend entries = {$bias = 0.3$,$bias = 0.1$,$bias = 0(Theory)$ }, legend pos =  south east]
        \addplot table[x index=0,y index=1] {"Data/Final_Bias/31/Mean_N_Parents3.dat"};
        \addplot table[x index=0,y index=1] {"Data/Final_Bias/30/Mean_N_Parents3.dat"};
        \addplot[dashed, red] table[x index=0,y index=1] {"Data/Final_Bias/31/theory_threshold.dat"};
        \end{semilogxaxis}
    \end{tikzpicture}    \\

    \end{tabular}
    }
    \caption{\textbf{Comparison of Power values in case of biased sampling:} Figure shows the effect of sampling bias on the power of tracing attack. We generate synthetic data using graph structures of different complexity that are learned on Genome data. The conditional probability values are generated from a Dirichlet distribution fitted to the conditional probabilities in corresponding graph of Genome data. The parameter $bias$ is used to tune the bias in sampling of the pool. We can observe that power of attack in case of biased sampling is greater than the theoretical bound (with out considering bias). With increasing value of $bias$, the pool distribution deviates more from the population distribution, which increases the power of attack. }
    \label{table: NewSamplingBiasPower}
    \end{figure*}
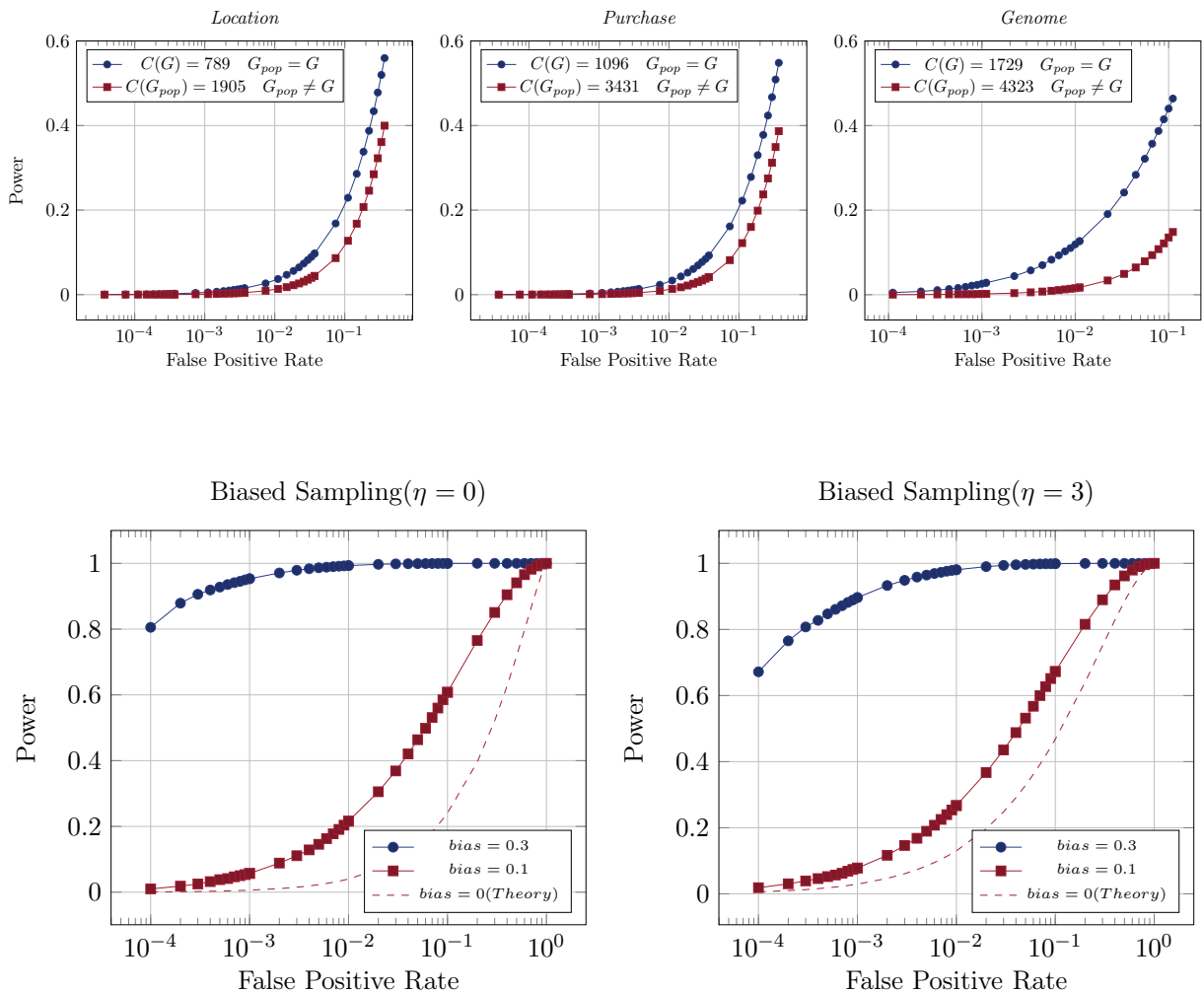

\end{document}